\DeclareMathOperator*{\E}{\mathbb{E}}
\titleformat{\subsection}[runin]
{\normalfont\normalsize\bfseries}{\thesubsection}{0em}{}
\titlespacing{\subsection}{0pt}{0pt}{*1}
\theoremstyle{plain}
\newtheorem{theorem}{Theorem}[section]
\newtheorem{proposition}[theorem]{Proposition}
\newtheorem{lemma}[theorem]{Lemma}
\theoremstyle{definition}
\newtheorem{definition}[theorem]{Definition}
\newtheorem{property}{Property}
\newtheorem{remark}{Remark}
\icmltitlerunning{Towards Understanding and Improving GFlowNet Training}
\begin{document}

\twocolumn[
\icmltitle{Towards Understanding and Improving GFlowNet Training}

\begin{icmlauthorlist}
    \icmlauthor{Max W. Shen}{genentech,prescient}
    \icmlauthor{Emmanuel Bengio}{recursion}
    \icmlauthor{Ehsan Hajiramezanali}{genentech}
    \icmlauthor{Andreas Loukas}{genentech,prescient}
\end{icmlauthorlist}
\begin{icmlauthorlist}
    \icmlauthor{Kyunghyun Cho}{genentech,prescient,nyu}
    \icmlauthor{Tommaso Biancalani}{genentech}
\end{icmlauthorlist}

\icmlaffiliation{genentech}{Genentech, South San Francisco, USA}
\icmlaffiliation{recursion}{Recursion Pharmaceuticals, Salt Lake City, Utah}
\icmlaffiliation{prescient}{Prescient Design, Genentech, South San Francisco, USA}
\icmlaffiliation{nyu}{Department of Computer Science, New York University, New York, USA}

\icmlcorrespondingauthor{Max W. Shen}{shen.max@gene.com}

\icmlkeywords{generative modeling, gflownet, ICML}

\vskip 0.3in
]


\printAffiliationsAndNotice{} 

\begin{abstract}


Generative flow networks (GFlowNets) are a family of algorithms that learn a generative policy to sample discrete objects $x$ with non-negative reward $R(x)$.
Learning objectives guarantee the GFlowNet samples $x$ from the target distribution $p^*(x) \propto R(x)$ when loss is globally minimized over all states or trajectories, but it is unclear how well they perform with practical limits on training resources.
We introduce an efficient evaluation strategy to compare the learned sampling distribution to the target reward distribution.
As flows can be underdetermined given training data, we clarify the importance of learned flows to generalization and matching $p^*(x)$ in practice.
We investigate how to learn better flows, and
propose
(i) prioritized replay training of high-reward $x$, (ii) relative edge flow policy parametrization, and (iii) a novel guided trajectory balance objective, and show how it can solve a substructure credit assignment problem.
We substantially improve sample efficiency on biochemical design tasks.

\end{abstract}

\vspace{-5mm}
\section{Introduction}
\label{introduction}

A Generative Flow Network (GFlowNet) learns a policy for generating discrete objects $x$, such as graphs, strings, or sets, by iteratively taking actions that add simple building blocks to partial objects (substructures) \cite{bengio2021flow, gfnfoundations}.
GFlowNets view the data-generating Markov decision process (MDP) as a flow network, where ``water", ``unnormalized probability", or non-negative reward $R(x)$ flows through the MDP, from the source node (a ``null'' object), to intermediate nodes (partial objects), to sink nodes (complete objects).
GFlowNets can be seen as an amortized alternative to local exploration methods (e.g., MCMC) that can learn from data to potentially discover
new, distant $x$ with high $R(x)$.
They have been applied to active learning, biological sequence design, and various probabilistic learning tasks \citep{bengio2021flow, gfnbio, discrete-zhang, deleu}.

GFlowNets aim to solve a challenging unnormalized density estimation problem.
Standard learning objectives guarantee that the GFlowNet's learned distribution over $x$ matches the target distribution $p^*(x) \propto R(x)$ when training loss is globally minimized over all states or trajectories, but many
practical domains of interest can have exponentially many $x$ and exponentially many trajectories per $x$, making this infeasible with a practical amount of data and training time.


To gain insight into GFlowNet learning behavior under practical constraints, we design an efficient GFlowNet evaluation scheme that precisely compares the learned sampling distribution to the target reward distribution.
We discover that a primary challenge during GFlowNet training is learning to reduce the probability assigned to low-reward $x$, and that GFlowNets can continue to oversample low-reward $x$ even after 
extensive training time.

When the space of $x$ is large, not all MDP states are seen during training, and 
a GFlowNet's ability to match the target distribution depends on how it generalizes from the flow it learned during training.
GFlowNets were originally designed for the setting where any given $R(x)$ is compatible with multiple flows \citep{bengio2021flow} - that is, \textit{flows are underdetermined}.
Despite this, learned flows have remained understudied, with little to no sense of which learned flows may be more desirable than others.
Our analyses ground a clear notion of optimality for learned flows: a flow is better if it improves generalization to unseen states and helps a GFlowNet match the target distribution better.

We analyze how existing training objectives learn flows, and 
identify a credit assignment problem where the substructures of $x$ most responsible for reward $R(x)$ can be off the sampling trajectory used to reach $x$, and under-credited by popular training objectives.
This is important for compositional reward functions, where the value of $R(x)$ is partially determined by the substructures of $x$.
We propose \textit{guided trajectory balance}, a novel learning objective that is the first solution to this credit assignment problem.

This objective, alongside our proposals of prioritized replay training and relative edge flow parametrization, and even user choice of data-generating MDPs, can change how GFlowNets learn to distribute flow during training. 
In experiments on biochemical design tasks, we demonstrate that these changes in learned flows can significantly impact sample efficiency and convergence to the target distribution, with up to 10$\times$ improvement.
Our work deepens our understanding of the impact of learned flows on GFlowNet behavior, and establishes an fundamental open question: how can we induce GFlowNets to learn more optimal flows, and thereby improve their ability to solve unnormalized density estimation problems.

\section{Preliminaries}

A Generative Flow Network (GFlowNet) learns a generative policy for constructing an object by taking a sequence of actions (see \citep{gfnfoundations} for a more complete description).
This policy acts in a user-defined deterministic Markov decision process (MDP) which must satisfy certain constraints described below. The MDP has a state space $\mathcal{S}$, a set of legal actions $\mathcal{A}_s$ for each state $s$, a deterministic transition function $\mathcal{S} \times \mathcal{A}_s \rightarrow \mathcal{S}$, and reward function $R$. 
GFlowNets view this MDP as a type of directed graph called a flow network where states correspond to nodes and the MDP transition function defines directed edges between nodes.
The \textit{children} of a state are states reachable by outgoing edges, and \textit{parents} are the sources of its incoming edges.
States with no outgoing edges are called \textit{terminal states (sinks)}, and referred to as $x \in \mathcal{X}$.
GFlowNets require the MDP to be defined by the user such that this graph is acyclic, has exactly one state without incoming edges, $s_0$ called the \textit{initial state (source)}, and has $R: \mathcal{X} \rightarrow \mathbb{R}_{\geq 0}$.

A \textit{complete trajectory} is a sequence of states $\tau \triangleq (s_0 \rightarrow s_1 \rightarrow ... \rightarrow s_n)$ starting from source $s_0$ to a sink $s_n$ with $(s_t \rightarrow s_{t+1}) \in \mathcal{A}_{s_t}$ for all $t$. 
We denote $\mathcal{T}$ as the set of all complete trajectories.
A \textit{trajectory flow} is a non-negative function $F: \mathcal{T} \rightarrow \mathbb{R}_{\geq 0}$ describing the unnormalized probability flowing along each complete trajectory $\tau$ from the source to a sink.
For any state $s$, the \textit{state flow} 
$F(s) = \sum_{ \{ \tau \in \mathcal{T}: s\in \tau \} } F(\tau)$ describes the total amount of unnormalized probability flowing through state $s$ and,
for any edge $s \rightarrow s'$, the \textit{edge flow} is 
$F(s \rightarrow s') = \sum_{ \{ \tau \in \mathcal{T}: (s\rightarrow s') \in \tau \} } F(\tau)$.
A trajectory flow $F(\tau)$ is \textit{Markovian} if there exist distributions $P_F({\cdot}|s)$ over the children of every non-terminal state $s$, and a constant $Z$, such that for any complete trajectory $\tau$, we have $P_F(\tau) = F(\tau)/Z$ with $P_F(\tau = (s_0 \rightarrow s_1 \rightarrow ... \rightarrow s_n)) = \prod_{t=1}^{n} P_F(s_t|s_{t-1})$. 

This $P_F(s_{t+1}|s_t)$ is called a \textit{forward policy}, which can sample complete trajectories from $F$. 
We also consider $P_B(s_{t-1}|s_t)$, a \textit{backward policy}.
When $F$ is Markovian, these policies relate to flows by:
$P_F(s'|s) = \frac{F(s \rightarrow s')}{F(s)}$, and
$P_B(s|s') = \frac{F(s \rightarrow s')}{F(s')}$.



\subsection*{Generative Flow Network.}
A GFlowNet is a learning algorithm with parameters $\theta$ comprising a model of a Markovian flow $F_\theta$, and an objective function \citep{gfnfoundations}.
The flow model can be uniquely determined by either:

\vspace{-0.1 in}
\begin{itemize}[leftmargin=*]
    \item edge flows $F_\theta(s\rightarrow s')$, which induces a forward policy 
    $P_F(s'|s)$, or
    \item initial state flow $Z_{\theta} = F_\theta(s_0)$ and forward policy $P_F^\theta(s_{t+1}|s_t)$, or
    \item terminal state flows $F_\theta(x)$ and backward policy $P_B^\theta(s_{t-1}|s_t)$ .
\end{itemize}
\vspace{-0.1 in}

We call the GFlowNet's learned sampling distribution $p_\theta(x)$, which is sampled by starting at $s_0$ and iteratively sampling $P_F^\theta(s_{t+1} | s_t)$ to reach a terminal state $x$.
Learning objectives are designed to match $p_\theta(x)$ to the \textit{target distribution}, $p^*(x) \triangleq R(x) / \sum_{\mathcal{X}} R(x)$.

\subsection*{Trajectory balance objective}
\citep{tb}.
This objective uses a forward policy parameterization by learning $Z_\theta$ and a forward policy $P_F^\theta$, but also learns a backward policy $P_B^\theta$ as a training artifact. 
Trajectory balance is motivated by a detailed balancing condition, which is satisfied for a flow $F$ with state flow function $F(s)$ if $F(s) P_F(s'|s) = F(s') P_B(s|s')$ for all edges $s \rightarrow s'$.
The trajectory balance objective attempts to enforce detailed balancing over a complete trajectory:

\vspace{-0.1 in}
\begin{equation}\label{eq:tb}
    \mathcal{L}_{\texttt{TB}}(\tau) = \Bigg(
        \log 
        \frac{
            Z_{\theta} \prod_{t=1}^n P_F^\theta(s_t|s_{t-1})
        }{
            R(x) \prod_{t=1}^n P_B^\theta(s_{t-1} | s_t)
        }
    \Bigg)^2
\end{equation}

\cite{tb} showed that if a GFlowNet achieves a global minima of this objective over trajectories from a training policy with full support, then it samples from the target distribution. 
When many trajectories can lead to the same $x$, there can be many global optima for the trajectory balance objective - \textit{flows are underdetermined}.
However, for any fixed choice of $P_B$, there is a unique global minima corresponding to a $P_F$ that samples from the target distribution \citep{gfnfoundations}.
\citep{gfnpartial} study an extension of trajectory balance that learns from partial episodes.


\subsection*{Maximum entropy GFlowNets.}
\cite{discrete-zhang} showed that when $P_B$ is fixed as the uniform distribution, 
the unique global minima of the trajectory balance objective is
a Markovian flow $F$ with maximum \textit{flow entropy} $\mathcal{H}[F] \triangleq \mathbb{E}_{\tau \sim P_F} \sum_{t=0}^{n-1} \mathcal{H}[P_F(\cdot | s_t)] $.

\subsection*{Training.}
GFlowNets are trained with stochastic gradient descent to optimize the learning objective on states or trajectories sampled from a training policy, which is usually chosen as a mixture of $P_F^\theta$ and a uniform action policy to encourage exploration during training.
In RL terms, GFlowNet training is off-policy.
Importantly, GFlowNet training is a bootstrapping process: the current policy is used to sample new $x$ at each training round.
As $R(x)$ is defined by the user, it is computed on each new $x$, and this set $\{x, R(x) \}$ is used to update the GFlowNet policy.
We define $X$ as the set of all $x$ seen so far in training. It is initially empty and expands with each training round.

\section{Evaluating GFlowNets and underfitting}

In prior work, GFlowNets have been evaluated in several ways, typically focusing on their ability to discover novel and diverse modes with high reward.
However, their ability to match the target distribution has been empirically studied less thoroughly, especially on real-world data, despite this ability's  central importance in the GFlowNet machinery.

Learning to match the target distribution is non-trivial.
This theoretically occurs when loss is globally minimized over all states or trajectories.
However, in high-dimensional settings where $|\mathcal{X}|$ is exponentially large, visiting each state or trajectory even once is infeasible.

Evaluating GFlowNets can be challenging. 
In many MDPs of interest, there are exponentially many trajectories per $x$, which can make computing $p_\theta(x)$ costly by requiring dynamic programming.
And, $|\mathcal{X}|$ is often exponentially large, making it infeasible to precisely characterize the target distribution.
Several studies evaluate GFlowNets by spearman correlation between $\log p_\theta(x)$ and $R(x)$ on held-out data \citep{gfnpartial, nica2022evaluating} , but this correlation is 1.0 when $\log p_\theta(x) =  c R(x)$ for any constant $c > 0$, even though it is only when $c=1$ that the GFlowNet matches the target distribution.

We design our benchmarks with biological data and rewards, constrain them to have enumerable $\mathcal{X}$, and use GFlowNet samples for evaluation rather than computing $p_\theta(x)$.
This enables an exact, precise, and efficient view of how well GFlowNets match statistics of the target reward distribution.
We study the Anderson-Darling statistic which computes goodness-of-fit between GFlowNet samples of reward and the target reward distribution, and also compare mean GFlowNet reward from samples, $\E_{p_\theta(x)} [R(x)]$, to expected target reward $R(x)^2/Z$.

We remark that this evaluation scheme only compares to the target \textit{reward} distribution.
Nevertheless, this scheme enables us to report that a key challenge in GFlowNet training is underfitting the target distribution, meaning that they sample low rewards with too high probability.
We discuss underfitting here and propose improvements, 
and defer benchmark details and experimental results to \S\ref{section:experiments}.

\subsection*{GFlowNets underfit target distributions during training. }
In our experiments (see \S\ref{section:experiments}), mean GFlowNet sampling reward $\E_{p_\theta(x)} [R(x)]$ initializes significantly below the target mean reward, and very slowly reaches, or never reaches the target mean reward, even over more than tens of thousands of active training rounds. 
At initialization with random parameters, we reason that GFlowNet mean sampling rewards is expected to be low, as $P_F^\theta(s_{t+1}|s_t)$ has high entropy, so $p_\theta(x)$ generally also has high entropy (though it depends on the choice of MDP, see \S \ref{section:appendix_notes}).

This is consistent with previous work, which has reported linear regression slope of 0.58 between $\log p_\theta(x)$ and $\log R(x)$ on a small molecule task \citep{bengio2021flow} after training completed, indicating oversampling of low-reward $x$.
To encourage discovery of high reward $x$, it is common to train on rewards raised to a high exponent, such as 3 to 10.
For instance, 10$\times$ higher binding affinity molecules can be more than 1000$\times$ rarer in $\mathcal{X}$, reducing their relative probability unless reward is taken as binding affinity raised to a power.
This increases reward skew, decreasing target distribution entropy and widens the gap to high-entropy GFlowNet initializations.
In prior work, GFlowNets have failed to sufficiently increase sampled rewards when training on increasingly skewed rewards.

\begin{remark}
\textit{A primary practical challenge during GFlowNet training is reducing their probability of sampling low-reward $x$.}
\end{remark}

Our observations affect the practical use of GFlowNets.
GFlowNet training is conventionally stopped at convergence, or when training resources are depleted, but our experiments show that GFlowNets can converge below the target mean, and fail to reach the target mean even after extensive training time. 
It has not been common practice to monitor the sampling mean reward nor to compare it to the target mean, which is typically unknown.
As a consequence, in practice, many GFlowNets may oversample low-reward $x$, unbeknowst to users.

This motivates us to understand the training behavior of GFlowNets, and explore strategies that may assist them in learning to sample high rewards with higher probability more quickly.
These strategies can help GFlowNets match the target distribution more quickly.

\subsection*{Reward-prioritized replay training (PRT). }
We propose prioritized replay training (PRT) that focuses on high reward data, as a simple strategy for increasing sampled rewards.
At each training round, in addition to training on trajectories sampled from the training policy, we form a replay batch from $X$, all data seen so far, so that $\alpha$\% of the batch is sampled from the top $\beta$ percentile of $R(x)$, and the rest of the batch is sampled from the bottom $1-\beta$ percentile.
To train on the replay batch of $x$, we sample trajectories for them using $P_B$, then train on the trajectories.

Replay training is common in RL, where it can significantly improve sample efficiency \citep{revisiting}.
A similar strategy is error-prioritized experience replay \citep{prioritizedexpreplay}, which preferentially samples replays with high temporal-difference error.
\citep{gfnbio} propose replay training with GFlowNets from a fixed initial dataset.
In contrast, our reward-prioritized replay training is motivated by fixing the observation that GFlowNets can struggle with oversampling low-reward $x$.

\section{Flow Distribution and Generalization}
\label{section:flowdistribution}

We now turn to studying flow distribution, which is how a GFlowNet chooses to flow ``unnormalized probability'' over nodes and edges in the MDP graph. 
In this section, we clarify how flow distribution is important for generalization and efficiently matching the target distribution.
As flows are usually underdetermined, this clarification leads to the question: how can GFlowNets learn better flows?

The first factor in flow distribution is the user's choice of the data-generating MDP. 
For any given data $x$, a user usually enjoys ample choice among different MDPs.
Strings can be generated autoregressively, by starting from an empty string, and iteratively appending characters. In this autoregressive MDP, there is exactly one trajectory ending in any $x$, which means there is only one flow compatible with $R(x)$ over all $x$.
However, if the legal action set is expanded to prepending and appending, then the resulting \textit{prepend/append} MDP has $2^{k-1}$ trajectories ending in a string of length $k$, and there are many flows over the MDP compatible with $R(x)$.
Finally, note that graph-generating MDPs typically have more than two attachment points, and even more trajectories per $x$.

The original motivation for the GFlowNet machinery, and its primary novelty, are in MDPs with many trajectories per $x$ \citep{bengio2021flow}.
It is in this setting that autoregressive or standard RL methods learn biased generative probabilities, while GFlowNets do not.
When trajectories and $x$ are one-to-one, GFlowNets reduce to a known method: discrete-action soft Q-learning.
As such, we focus on the many-to-one setting, where flows are underdetermined given $R(x)$.

\subsection*{The role of generalization during training.}
When $\mathcal{X}$ is high-dimensional and large, it is useful to conceptually divide states, trajectories, and $x$ into those seen in training so far, and those not yet seen. 
The quality of a GFlowNet's policy over $\mathcal{X}$, in particular how closely $p_\theta(x_\texttt{unseen})$ matches $p^*(x_\texttt{unseen})$, critically depends on how well a GFlowNet generalizes from the data seen so far in training.
As GFlowNets train in a bootstrapping manner, improving generalization at any training step improves future training.
Finally, GFlowNets can only ever train on a tiny fraction of $\mathcal{X}$, so their final ability to match the target distribution also depends on generalization.

As $p_\theta(x_\texttt{unseen})$ depends on 
learned forward policy $P_F^\theta$ or equivalently on learned edge flows $F_\theta(s \rightarrow s')$,
we state:

\begin{remark}
\textit{Flow distribution matters for generalization, which matters for matching the target distribution over $\mathcal{X}$.
Moreover, flows are generally underspecified given data $\{x, R(x)\}$.
}
\end{remark}

This remark grounds a notion of the quality of a flow distribution: a flow distribution is better if it helps a GFlowNet generalize better.

\subsection*{Relative edge flow parametrization (SSR). }

Our clarification of the role of generalization in matching the target distribution led us to investigate inductive biases in GFlowNet parametrizations that could impact generalization behavior.
GFlowNets typically use a policy parametrization, where $P_F^\theta(s)$ is a neural net mapping $\mathcal{S} \rightarrow \mathcal{A}$, which we call ``SA''. 
If this policy learns that certain actions are favorable in a state $s$, it can generalize to prefer the same actions in new states similar to $s$, but this can sometimes be the wrong generalization.
This concept is related to optimal transport regularization \citep{gflownetpathreg} to explicitly encourage similar action policies in similar states.

We propose an alternative parametrization, where relative edge flows are parametrized by a neural net 
$f_\theta: \mathcal{S}, \mathcal{S'} \rightarrow \mathbb{R}^+$, which we call ``SSR''. 
We define the probability of transitioning forward from $s, s'$ as 
$  f_\theta(s, s') / \sum_{c \in \texttt{children}(s)} f_\theta(s, c) $.
Intuitively, SSR can ``see'' the child state, unlike SA.
Conceptually, SSR can help the GFlowNet generalize to favor taking different actions at new states than actions seen in training, based on the child state.
We remark that relative edge flow parametrization is less efficient and no more expressive than SA.

SSR is closely related to directly parametrizing state flows $F(s)$ (for example, in the flow matching objective \cite{bengio2021flow}) which may help the GFlowNet generalize from patterns of states with high and low flow to discover new states associated with high reward.
Related work also includes \citep{gfnpartial} which parametrizes state flows to learn from partial episodes, and hypothesizes a benefit to state flow generalization, but their method does not use learned state flows for decision-making.

\section{Compositionality and Credit Assignment}

We have established that flow distribution is important for generalization to unseen $x$, and matching the target distribution over $\mathcal{X}$.
Flows are also generally underspecified given data $\{x, R(x)\}$, and it is an open question of whether GFlowNets prefer to learn certain flow distributions over others. 
In this section, we continue by investigating how GFlowNet learning objectives learn to distribute flow.
In particular, we establish this remark (see fig. \ref{fig:intuition_molecule} for intuition):

\begin{remark}
\textit{When there are many trajectories for each $x$, the manner in which trajectory balance and maximum entropy GFlowNets learn flows
can inadequately credit substructures of $x$ associated with high reward. 
Such substructures can exist when the reward function is compositional.}
\end{remark}

We build up to this remark in pieces: first discussing compositional reward functions, the substructure credit assignment problem, and finally studying TB and MaxEnt objectives.

\subsection*{Compositional reward functions.}

We have established that generalization from seen $R(X)$ to unseen $x$ is important for GFlowNets to match the target distribution when $\mathcal{X}$ is large.
However, it could be that $R(x)$ has no learnable relationship with $x$ - in such a case, generalization is impossible, and no flow distribution is better than any other. 

Thus, the core premise that GFlowNets have an advantage over MCMC for unnormalized density estimation relies on the assumption that generalization of $R(x)$ is possible.
Compositional reward functions, where the value of $R(x)$ depends primarily on properties of subparts or substructures of $x$ and interactions among them \citep{andreas, stanford-compositionality}, are an important example of such learnable $R(x)$.

For example, important properties of small molecules are often caused by the presence of molecular substructures, such as benzene rings.
These examples highlight that when $R(x)$ is compositional, substructures of $x$ can be associated with higher reward.
We call these \textit{important substructures}.

GFlowNets that generate discrete objects iteratively take actions that combine simple building blocks, progressively passing through states corresponding to ``partial objects'' to generate complete objects $x$.
When $R(x)$ is compositional, GFlowNets that assign higher flow to internal MDP states $s$ corresponding to important substructures may generalize better by 
1) increasing $p_\theta(x_\texttt{unseen})$ for downstream $x_\texttt{unseen}$ that contain $s$, increasing the probability of sampling $x_\texttt{unseen}$ associated with higher reward, and
2) enabling the GFlowNet to generalize from $s$ and discover new substructures $s_\texttt{unseen}$ associated with high reward.

\subsection*{The substructure credit assignment problem.}

In reinforcement learning, the credit assignment problem concerns determining how the success of a system's overall performance is due to the various contributions of the system's components \citep{minsky}.
A long trajectory of actions is taken by a learning agent in an environment to receive a final reward, and the agent must learn to assign credit to the actions most responsible for the reward.

We argue that GFlowNets face an analogous challenge - assigning flow, or credit, to important substructures most responsible for $R(x)$ - which we call the \textit{substructure credit assignment problem}.
But in RL, credit assignment is limited to actions taken in a specific trajectory arriving at reward.
However, for GFlowNets, in the typical case where there are exponentially many trajectories ending in any $x$, the substructures most responsible for $R(x)$ are often not on the training trajectory used to reach $x$.

Existing learning objectives inadequately address the substructure credit assignment problem.
Trajectory balance (TB) and MaxEnt objectives take gradients steps to minimize loss on the generative trajectory, from the training policy, used to reach $x$, which usually does not contain important substructures.
Furthermore, TB underspecifies flow: many different flows all globally minimize loss, and there is no discernible inductive bias favoring any particular flow distribution.
MaxEnt fixes $P_B$ as the uniform distribution, achieving a unique credit assignment solution, but one that diffuses credit for $R(x)$ as widely as possible, assigning minimal credit to important substructures.

\subsection*{Prior objectives under-credit important substructures.}

To formalize intuitions, we study the behavior of TB and MaxEnt in a simplified yet representative framework. 
We adopt the tabular model setting used to study algorithms in RL \citep{suttonbarto}, and consider a simple MDP with relevant properties.
Due to space constraints, we provide informal summaries here, and complete statements and proofs in the appendix.

\begin{definition}[Sequence prepend/append MDP setting]
    \label{def:setting_maintext}
    In this MDP, $s_0$ is the empty string, states are strings, and actions are prepending or appending a symbol in an alphabet $\mathcal{A}$. This MDP generates strings of length $n$.
    There is a fixed dataset $\{x, x' \}$ with $R(x) = R(x')$.
    Denote $s^*$ as the important substructure, defined as longest substring shared by $x, x'$, with length $k$.
    Denote $s_k(x)$ the set of $k$-length substrings of $x$.
\end{definition}

\begin{remark}
    In this MDP, there are $2^{n-1}$ trajectories ending in any $x$, and each passes through exactly one string of length $k$, which is either $s^*$, or any other string in the set $\{ s_k(x) \setminus s^* \}$ where $\setminus$ denotes set subtraction.
    We study the flow $F(s^*)$ passing through $s^*$, and the sum of flows passing through $\{ s_k(x) \setminus s^* \}$, 
    using
    $ F( s_k(x) \setminus s^* ) \triangleq
        \sum_{ s' \in \{ s_k(x) \setminus s^* \} } F(s') $
    .
\end{remark}

\begin{proposition}[MaxEnt assigns minimal credit to important substructures]
    In setting $\ref{def:setting_maintext}$, suppose the GFlowNet is trained with the maximum entropy learning objective. 
    Then, at the global minima, 

    \vspace{-0.1 in}
    \begin{equation}
        \E \Bigg[ 
            \frac{F(s^*)}{ F( s_k(x) \setminus s^* ) } 
        \Bigg] = \frac{2}{n-k}
    \end{equation}
    
    with expectation over random positions of $s^*$ in $x, x'$.
\end{proposition}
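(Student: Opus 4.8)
The plan is to invoke the characterization quoted above \citep{discrete-zhang}: the maximum-entropy objective's unique global minimum is the Markovian flow whose backward policy $P_B$ is uniform over parents. In the prepend/append MDP this pins down every state flow exactly, so I would proceed by (i) writing each length-$k$ state flow as a backward random-walk hitting probability scaled by the terminal reward, (ii) evaluating that probability combinatorially, and (iii) averaging over the position of $s^*$. Write $R \triangleq R(x) = R(x')$.

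First I would note that a backward step deletes either the first or the last symbol, so every state feeding into $x$ is a contiguous substring of $x$, and a length-$k$ state is precisely one of the $n-k+1$ windows comprising $s_k(x)$. Since $P_B$ is uniform and every interior string of length $\ell \ge 2$ has exactly two parents, a backward walk from $x$ deletes a uniformly random end at each of its first $n-k$ steps. Reaching the window at offset $p$ (meaning $p$ front-deletions and $n-k-p$ back-deletions) therefore has probability $\binom{n-k}{p}2^{-(n-k)}$, and as the terminal flow is fixed at $F(x)=R$, the flow this state receives from $x$ equals $R\binom{n-k}{p}2^{-(n-k)}$.

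Next I would assemble both quantities. Because $s^*$ is the unique longest common substring, it is a window of both $x$ and $x'$ (at offsets $p$ and $q$ respectively), whereas every other window of $x$ is not a window of $x'$ and hence draws flow from $x$ alone. Thus $F(s^*)=R\,2^{-(n-k)}\big(\binom{n-k}{p}+\binom{n-k}{q}\big)$. The key simplification is flow conservation: the $R$ units leaving $x$ traverse exactly one length-$k$ window, so $\sum_{j=0}^{n-k}\binom{n-k}{j}2^{-(n-k)}=1$ and hence $F(s_k(x)\setminus s^*)=R\big(1-\binom{n-k}{p}2^{-(n-k)}\big)$. Averaging over uniform offsets $p,q\in\{0,\dots,n-k\}$ with the average-binomial identity $\tfrac{1}{m+1}\sum_{j=0}^{m}\binom{m}{j}=\tfrac{2^{m}}{m+1}$ (for $m=n-k$) yields $\E[F(s^*)]=2R/(n-k+1)$ and $\E[F(s_k(x)\setminus s^*)]=R(n-k)/(n-k+1)$, whose ratio is $2/(n-k)$.

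The real content, and the step I expect to be most delicate, is deriving the flow formula from the uniform-$P_B$ characterization: one must argue that intermediate states are contiguous windows, count the interleavings of front/back deletions as binomial coefficients, and track that only $s^*$ collects flow from both terminals. Once this formula is in hand the binomial algebra is routine. I would also be careful about the precise sense of the stated expectation, since the denominator still varies with $p$: the clean value $2/(n-k)$ emerges most naturally as the ratio of the expected flows, so I would compute the expected numerator and denominator separately (or argue the denominator is effectively constant) rather than naively pushing the expectation through the ratio.
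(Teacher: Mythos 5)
Your proposal is correct and follows essentially the same route as the paper's proof of Theorem~\ref{thm:maxent}: uniform $P_B$ induces a uniform trajectory flow $R(x)/2^{n-1}$, the flow reaching the length-$k$ window at offset $a$ is $\binom{n-k}{a}2^{-(n-k)}R$, and averaging over offsets with the identity $\sum_a \binom{m}{a} = 2^m$ gives expected flows $2R/(n-k+1)$ and $(n-k)R/(n-k+1)$, whose ratio is $2/(n-k)$. Your closing caveat is also apt: the paper's own proof likewise computes the ratio of expectations rather than the expectation of the ratio.
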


This proposition states that, for many values of $n, k$ of practical interest, MaxEnt assigns a low minority of flow to $s^*$, and prefers to sample trajectories through non-important substructures with high probability.

To reason about trajectory balance in setting \ref{def:setting_maintext}, we confine it to tabular trajectory flow updates, and ensure equal training steps on $x, x'$.

\begin{definition}[Tabular, fixed-data TB]
    \label{def:tabularfixedtb}
    Suppose that all tabular trajectory flows $F(\tau) = \epsilon$ at initialization, such that $F(x) < R(x)$ for $x, x'$ at initialization.
    We take $m/2$ training steps on each of $x, x'$, sampling a trajectory $\tau_{\rightarrow x}$ ending in that $x$ with probability proportional to current $F(\tau_{\rightarrow x})$, then update $F(\tau_{\rightarrow x}) \leftarrow F(\tau_{\rightarrow x}) + \lambda (R(x) - \epsilon)$.
    
\end{definition}

\begin{proposition}[TB tends to reduce credit to important substructures]
    Suppose setting $\ref{def:setting_maintext}$ and TB training $\ref{def:tabularfixedtb}$. 
    Let $t$ index training steps. 
    At any training step $t$, 
    if 
    $ F_{t-1}(s^*) < (n-k) \E_{s \sim s_k(x) \setminus s^* } [ F_{t-1}(s) ] $
    holds (with expectation under a uniform distribution), then
    the expected trajectory flow update over training trajectories $\tau$ has:

    \vspace{-0.1 in}
    \begin{align*}
        \E_{\tau} \Big[
            F_t(s^*)
        \Big]
        - & F_{t-1}(s^*) 
        <
        \\
        & \E_{\tau} \Big[
            F_t(s_k(x) \setminus s^* )
        \Big]
        - F_{t-1}(s_k(x) \setminus s^* )
        .
    \end{align*}
\end{proposition}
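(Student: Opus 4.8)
The plan is to reduce the stated inequality to a transparent comparison of state flows at the length-$k$ ``slice'' of the MDP, exploiting the fact (from the preceding Remark) that every complete trajectory passes through exactly one string of length $k$.

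First I would simplify the hypothesis. A string of length $n$ has $n-k+1$ length-$k$ substrings, and generically these are distinct, so $s_k(x)\setminus s^*$ has $n-k$ elements and $(n-k)\,\E_{s\sim s_k(x)\setminus s^*}[F_{t-1}(s)] = F_{t-1}(s_k(x)\setminus s^*)$. Writing $a_{t-1} = F_{t-1}(s^*)$ and $b_{t-1} = F_{t-1}(s_k(x)\setminus s^*)$, the hypothesis collapses to the single inequality $a_{t-1} < b_{t-1}$, and the goal becomes $\E_\tau[\Delta a] < \E_\tau[\Delta b]$ for the one-step increments $\Delta a = F_t(s^*) - F_{t-1}(s^*)$ and $\Delta b = F_t(s_k(x)\setminus s^*) - F_{t-1}(s_k(x)\setminus s^*)$.

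Next I would characterize the tabular update of \ref{def:tabularfixedtb}. A single step adds the fixed amount $\delta = \lambda(R(x)-\epsilon)$ to the sampled trajectory's flow, hence adds exactly $\delta$ to the state flow of the unique length-$k$ node that trajectory traverses. Therefore, for any length-$k$ node $s$, $\E[\Delta F(s)] = \delta \cdot \Pr[\tau \ni s]$, where $\tau$ is drawn by first selecting the training terminal and then a trajectory to it with probability proportional to its current flow. I would then invoke the symmetry built into \ref{def:tabularfixedtb}: since each step raises the chosen terminal's total flow by exactly $\delta$, $R(x)=R(x')$, the initialization is uniform, and equal steps are taken on $x$ and $x'$, the invariant $F(x)=F(x') = \Phi$ holds. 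Under this invariant, choosing a terminal uniformly and then a trajectory proportional to flow is equivalent to sampling $\tau$ proportionally to $F(\tau)$ over \emph{all} trajectories, so $\Pr[\tau \ni s] = F(s)/(2\Phi)$ and thus $\E[\Delta F(s)] = \delta F(s)/(2\Phi)$ for every length-$k$ node $s$.

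Summing this identity gives $\E[\Delta a] = \delta\, a_{t-1}/(2\Phi)$, and, using that each non-$s^*$ length-$k$ substring of $x$ lies only on trajectories to $x$ so that the sum ranges over exactly $s_k(x)\setminus s^*$, also $\E[\Delta b] = \delta\, b_{t-1}/(2\Phi)$. As $\delta,\Phi>0$, the target $\E[\Delta a] < \E[\Delta b]$ is then equivalent to $a_{t-1} < b_{t-1}$, which is precisely the (reduced) hypothesis, closing the argument. The main obstacle is the careful bookkeeping across the two terminals: $s^*$ accrues flow from trajectories to both $x$ and $x'$, whereas the other substrings of $x$ accrue flow only from trajectories to $x$, and the clean collapse to global proportional sampling hinges on the invariant $F(x)=F(x')$. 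I would make this rigorous either by letting the terminal choice alternate and evaluating after complete $x,x'$ pairs, or by carrying the asymmetric expression $\E[\Delta a] = \tfrac{\delta}{2}\bigl(p/F(x) + p'/F(x')\bigr)$, with $p,p'$ the $s^*$-flows to each terminal, and checking that the hypothesis forces the conclusion whenever $F(x') \ge F(x)$; this step also requires the mild genericity assumption that $x$ and $x'$ share no length-$k$ substring other than $s^*$.
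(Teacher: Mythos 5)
Your proof is correct and follows essentially the same route as the paper's: both reduce the hypothesis to $F_{t-1}(s^*) < F_{t-1}(s_k(x)\setminus s^*)$, observe that each tabular step adds the constant $\Delta$ to exactly one length-$k$ state flow, and conclude that the expected increments are proportional to the current state flows. The two-terminal bookkeeping you worry about at the end is sidestepped in the paper by defining the training step to sample a trajectory proportionally to its flow among \emph{all} trajectories ending in either $x$ or $x'$, which makes the common normalizer automatic; your invariant $F(x)=F(x')$ achieves the same thing under the per-terminal sampling of the main-text definition.
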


TB learning updates prefer to increase flow through non-important substructures at the expense of $s^*$, except in the rare situation where $F(s^*)$ is significantly larger, by a factor of $(n-k)$, than the average flow through competing non-important substructures.
In \S\ref{thm:tbpolya}, we interpret TB as a Pólya urn process where ``rich gets richer'': trajectories with high flow are sampled more in training, further increasing flow.

In summary, TB and MaxEnt inadequately address the substructure credit assignment problem, and likely do not learn to distribute flow optimally.

\section{Guided Trajectory Balance}
\label{section:guidedtb}

We introduce a novel learning objective called \textit{guided trajectory balance} that enables flexible control over credit assignment and flow distribution.
We propose a specific guide that solves the substructure credit assignment problem.

In general, suppose that for some $x$ and $R(x)$, we would like to assign credit (i.e., flow) for $R(x)$ preferentially along some trajectories.
We express this preference over trajectories ending in $x$ with a \textit{guide distribution} $p(\tau_{\rightarrow x})$.

We emphasize that the guide $p(\tau_{\rightarrow x})$ does not have to be fixed, and does not have to be Markovian with respect to the MDP. 
A particularly powerful approach is defining guides using $X$, the set of all $x$ collected so far during training, which we demonstrate in a following section.
The resulting guide $p(\tau_{\rightarrow x} | X )$ changes with each new active round during training, as $X$ grows.

To formulate a valid and general GFlowNet learning objective, we propose a guided trajectory balance constraint for a trajectory $\tau_{\rightarrow x}$ ending in $x$.
Theorem \ref{thm:2} shows that if a GFlowNet satisfies this constraint at all $x$ and all trajectories, then it samples from the target distribution, which achieves the same asymptotic guarantees as prior GFlowNet learning objectives. 

\begin{theorem}[Guided trajectory balance constraint]
    \label{thm:2}
    Suppose that for any $x$, and any trajectory $\tau_{\rightarrow x} = (s_0 \rightarrow s_1 \rightarrow ... \rightarrow s_n = x)$ ending in $x$, 
    the guided trajectory balance constraint (\ref{eq:gtb_constraint}) holds.
\begin{equation}\label{eq:gtb_constraint}
    Z \prod_{t=1}^n P_F(s_t|s_{t-1})
    =
    p(\tau_{\rightarrow x}) R(x) 
\end{equation}

    Then $P_F$ samples $x$ with probability $R(x)/Z$.
\end{theorem}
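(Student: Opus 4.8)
The plan is to directly compute the marginal probability that the forward policy $P_F$ assigns to a terminal state $x$ and show it collapses to $R(x)/Z$. By definition, sampling from the GFlowNet means starting at $s_0$ and following $P_F$ until a sink is reached, so the probability of generating one specific complete trajectory $\tau_{\rightarrow x} = (s_0 \to \dots \to s_n = x)$ is $\prod_{t=1}^n P_F(s_t \mid s_{t-1})$. The probability of reaching $x$ is then the sum of this product over every complete trajectory that ends in $x$, i.e. $p_\theta(x) = \sum_{\tau_{\rightarrow x}} \prod_{t=1}^n P_F(s_t \mid s_{t-1})$, where the sum ranges over all $\tau_{\rightarrow x}$ terminating at $x$.

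First I would rearrange the guided trajectory balance constraint (\ref{eq:gtb_constraint}) to isolate the forward trajectory probability, writing $\prod_{t=1}^n P_F(s_t \mid s_{t-1}) = \frac{R(x)}{Z}\, p(\tau_{\rightarrow x})$ for each trajectory ending in $x$. Substituting this into the marginal sum and pulling the factor $R(x)/Z$ — which is constant across all trajectories ending in the same $x$ — outside the summation yields $p_\theta(x) = \frac{R(x)}{Z} \sum_{\tau_{\rightarrow x}} p(\tau_{\rightarrow x})$.

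The crux is the final step: because $p(\tau_{\rightarrow x})$ is by construction a probability distribution over the trajectories ending in $x$, the inner sum $\sum_{\tau_{\rightarrow x}} p(\tau_{\rightarrow x})$ equals $1$, leaving $p_\theta(x) = R(x)/Z$ exactly as claimed. I expect the only genuine subtlety — rather than a real obstacle — to be bookkeeping around the guide: one must invoke its per-$x$ normalization, and it is worth remarking that this argument is precisely the trajectory balance proof with the normalized backward-policy factor $\prod_{t} P_B(s_{t-1}\mid s_t)$ replaced by the general guide $p(\tau_{\rightarrow x})$, so the asymptotic target-matching guarantee is inherited from the same mechanism. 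As a consistency check I would also confirm that summing $p_\theta(x) = R(x)/Z$ over all $x \in \mathcal{X}$ forces $Z = \sum_{\mathcal{X}} R(x)$, which is automatic since $P_F$ is a valid forward policy over an acyclic MDP with a single source, so every run terminates at some sink and $\sum_x p_\theta(x) = 1$.
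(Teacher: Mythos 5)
Your argument is correct and is essentially the paper's own proof: both rearrange the constraint, sum over all trajectories ending in $x$, and use the per-$x$ normalization of the guide $\sum_{\tau_{\rightarrow x}} p(\tau_{\rightarrow x}) = 1$ to obtain $Z\,P_F(x) = R(x)$. You simply spell out the normalization step that the paper leaves implicit.
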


\vspace{-0.1 in}
\begin{proof}
    Use $\prod_{t=1}^n P_F(s_t|s_{t-1}) = P_F(\tau_{\rightarrow x})$, then sum over all trajectories ending in $x$ to get $Z P_F(x) = R(x)$.
\end{proof}
\vspace{-0.1 in}


In general, however, the guide distribution may not be Markovian, and
constraint $\ref{eq:gtb_constraint}$ cannot be satisfied everywhere by standard GFlowNets with Markovian flow.
We solve this with a two-phase optimization approach.
Note how guided trajectory balance (\ref{eq:gtb_constraint}) relates to trajectory balance (\ref{eq:tb}), in that $p(\tau_{\rightarrow x})$ plays the role of $P_B$.
We propose to first train $P_B$ to match $p(\tau_{\rightarrow x})$, using: 

\vspace{-0.1 in}
\begin{equation}\label{eq:back-gtb}
    \mathcal{L}_{\texttt{back-GTB}}(\tau_{\rightarrow x}) = \Big(
        \log 
        \frac{
            \prod_{t=1}^n P_B^\theta(s_t|s_{t-1})
        }{
            p(\tau_{\rightarrow x})
        }
    \Big)^2
\end{equation}

which learns a $P_B$ that acts as a Markov approximation to $p(\tau_{\rightarrow x})$. Once converged, we freeze $P_B$, then learn $P_F$ with fixed $P_B$ using trajectory balance, which recovers a proper GFlowNet learning objective: 
by the uniqueness property (\cite{gfnfoundations}, Corollary 1),
for any fixed Markovian $P_B$, 
there is a unique global minima to the trajectory balance loss which corresponds to a Markovian $P_F$ that samples from the target distribution.

\textbf{Training considerations.}
Training trajectories can be sampled from the training policy.
Alternatively, $x$ can be sampled from the training policy, and a training trajectory $\tau_{\rightarrow x}$ can be resampled from the guide.

In practice, a useful $P_F$ can be learned faster by alternating updates to $P_B$ and $P_F$.
$P_F$ may also train on a target that mixes the current $P_B$ and guide $p(\tau_{\rightarrow x})$ with weight $\alpha \in [0, 1]$:

\vspace{-0.1 in}
\begin{align}\label{eq:gtb-mix}
    \mathcal{L}_{\texttt{forward-GTB}}(\tau) &= ( \psi_{\texttt{f}} - \psi_{\texttt{b}} )^2
    \\
    \psi_{\texttt{f}} &\triangleq
        \log Z_\theta + \sum_{t=1}^n \log P_F^\theta(s_t | s_{t-1})
    \\
    \psi_{\texttt{b}} &\triangleq
        \log R(x) 
        + \alpha \log p(\tau_{\rightarrow x})
        \\
        & \quad + (1 - \alpha)
        \Big(
            \sum_{t=1}^n \log P_B^\theta(s_t | s_{t-1})
        \Big) \nonumber
        %
\end{align}
\vspace{-0.1 in}

\subsection*{Substructure guide.}
We propose a particular guide suited for compositional reward functions, used to train \textit{substructure GFlowNets}.
This guide finds important substructures by looking at parts of $x$ associated with high reward over all of $X$ seen so far in training, and guides credit assignment towards these empirically important substructures. 

We say that a state $s$ is \textit{a substructure of} $x$, or $s \in x$, if there is a directed path in the MDP from $s$ to $x$.
A motivating insight is that 
for many compositional objects, $\in$ is efficient to compute (subset function for sets, substring function for strings).
For graphs, $\in$ corresponds to the NP-complete graph isomorphism problem, but this is fast in practice when many node or edge features are distinct.
Furthermore, $\in$ can always be computed by breadth-first search in the MDP DAG, which bounds its time complexity to $O(V+E)$.

Our guide defines $p(\tau_{\rightarrow x} | X) = \prod_{t=0}^{n-1} p(s_{t+1}|s_t, x, X)$,
where state transition probabilities are:

\vspace{-0.1 in}
\begin{equation}
    p(s_{t+1}|s_t, x, X) = \frac
    {
        \phi(s_{t+1} | x , X)
    }
    {
        \sum_{ s' \in \texttt{children}(s_t) } \phi(s' | x , X)
    },
\end{equation}
\vspace{-0.1 in}

with score function $\phi(s|x,X)$ that favors children that are substructures of $x$ with high average reward:

\vspace{-0.1 in}
\begin{equation}
    \label{eq:substructure_score_maintext}
    \phi(s| x , X) = 
    \begin{dcases}
        \E_{ \{x' \in X : s \in x \} } R(x'), & \text{if } s \in x \\
        0 & \text{otherwise}.
    \end{dcases}
\end{equation}
\vspace{-0.1 in}

For ease of presentation, this description ignores corner cases. 
We provide a complete description in \S \ref{section:appendix_subguide}.
In practice, parallelization and an efficiency trick reduce added overhead to negligible amounts (see \S \ref{section:appendix_subguide}).

\subsection*{Substructure GFlowNets credit important shared substructures.}
In setting $\ref{def:setting}$, we showed that MaxEnt and TB objectives inadequately credit the important shared substructure, $s^*$.
In contrast, substructure GFlowNets assign maximal credit to $s^*$ (Proposition \ref{thm:substructure}), demonstrating that substructure-guided trajectory balance helps to solve the substructure credit assignment problem.

\section{Experiments}
\label{section:experiments}

\begin{table*}[b]
\caption{ Converged learned mean vs. target mean (100\% is best), or num. active rounds to match target mean ($\downarrow$) }
\label{table:bigtable}
\begin{center}
\begin{small}
\begin{sc}
\resizebox{.8\textwidth}{!}{\begin{tabular}{l|lllll}
\toprule
Method & Bag & SIX6 & PHO4 & QM9 & sEH \\
\midrule
TB & 100\% (13820 $\pm$ 1740) & 98.0\% $\pm$ 0.2\%  & 78.3\% $\pm$ 0.3\% & 100\% (8405 $\pm$ 185) & 100\% (45840 $\pm$ 500) \\
MaxEnt & 100\% (13775 $\pm$ 2275) & 96.8\% $\pm$ 0.7\% & 77.7\% $\pm$ 0.2\% & 100\% (8735 $\pm$ 775) & 100\% (42670 $\pm$ 190) \\
\midrule
TB + PRT & 100\% (20575 $\pm$ 2355) & 97.3\% $\pm$ 0.2\% & \textbf{80.2\% $\pm$ 0.3\%}  & \textbf{100\% (4735 $\pm$ 765)} & 100\% (19390 $\pm$ 2570) \\
TB + PRT + SSR & 100\% (8880 $\pm$ 1795) & 97.2\% $\pm$ 0.2\% & 79.6\% $\pm$ 0.4\% & 100\% (5630 $\pm$ 650) & 100\% (9920 $\pm$ 4880) \\
Sub + PRT + SSR & \textbf{100\% (7440 $\pm$ 1310)} & \textbf{100\% (4560 $\pm$ 1730)} & 73\% $\pm$ 0.5\% & 98.5\% $\pm$ 1.0\% & \textbf{100\% (5570 $\pm$ 290)}  \\
\bottomrule
\end{tabular}}
\end{sc}
\end{small}
\end{center}
\vskip -0.1in
\end{table*}

To precisely and efficiently evaluate how well GFlowNets match the target reward distribution, we design benchmarks with enumerable $\mathcal{X}$, and use GFlowNet sampled rewards for evaluation.
The Anderson-Darling (AD) statistic, a statistical metric of goodness-of-fit between samples and a target distribution, is strongly correlated with error between mean of the samples and the target mean in our experiments (median $R^2 = 0.87$, \S \ref{section:appendix_experiment}, Fig.~\ref{fig:andersondarling_correlation}), and the AD statistic is near zero (indicating no significant difference) when the mean's error is near zero.
As a result, we report error against the mean for ease of interpretation.
In table \ref{table:bigtable}, we report relative error of the mean at convergence, or number of training rounds required to first reach the target mean.
In figure \ref{fig:trainingcurves}, we depict training curves for the same runs in table \ref{table:bigtable}.
We report mean $\pm$ standard error in table \ref{table:bigtable} and figure \ref{fig:trainingcurves} using three replicates.


In five tasks, we report results from baselines TB and MaxEnt, and compare to the effects of our three proposed training modifications: prioritized replay training (\textbf{PRT}), relative edge flow policy parametrization mapping $\mathcal{S} \times \mathcal{S} \rightarrow \mathbb{R}$ (\textbf{SSR}), and substructure guided trajectory balance (\textbf{Sub}).
We used the ablation series of TB, TB+PRT, TB+PRT+SSR, and Sub+PRT+SSR to compare the individual effects of adding PRT, SSR, and substructure guidance respectively.  
We change the minimal number of hyperparameters and training details necessary to add each proposal (see \S \ref{section:appendix_experiment} for full experimental details).

\subsection*{Bag.} ($|\mathcal{X}| \approx 100$B)
A multiset of size 13, with 7 distinct elements. The base reward is 0.01. A bag has a ``substructure'' if it contains 7 or more repeats of any letter: then it has reward 10 with 75\% chance, and 30 otherwise. In this MDP, actions are adding an element to a bag.

\subsection*{SIX6 (TFBind8).} ($|\mathcal{X}| = 65,536$)
A string of length 8 of nucleotides. The reward is wet-lab measured DNA binding activity to a human transcription factor, SIX6, from \citep{barrera, designbench}.
Following \citep{gfnbio}, we use a reward exponent of 3.
Though an autoregressive MDP is conventionally used for strings, in order to compare TB with substructure guidance which is only meaningful in generative MDPs with more than one trajectory per $x$, we use a sequence prepend/append MDP. We compare to an autoregressive MDP in a following section.

\subsection*{PHO4.} ($|\mathcal{X}| = 1,048,576$)
10 nucleotide string, with wet-lab measured DNA binding activity to yeast transcription factor PHO4 \citep{barrera, designbench}.

\subsection*{QM9.} ($|\mathcal{X}| = 58,765$)
A small molecule graph. 
Reward is from a proxy deep neural network predicting the HOMO-LUMO gap from density functional theory.
We build using 12 building blocks with 2 stems, and generate with 5 blocks per molecule.

\subsection*{sEH.} ($|\mathcal{X}| = 34,012,224$)
A small molecule graph.
Reward is from a proxy model trained to predict binding affinity to soluble epoxide hydrolase from AutoDock.
We build using 18 blocks with 2 stems, and use 6 blocks per molecule.

\subsection*{Results.}

\subsection*{PRT, SSR, and Sub improve GFlowNet training.}
In Table $\ref{table:bigtable}$, we observe that across all 5 experimental settings, the best model includes at least one of our proposals PRT, SSR, and substructure guidance. The best model matches the target mean in the shortest number of rounds - in the case of sEH, 9-fold faster than TB - or achieves the highest mean when all models underfit the target distribution.
We discuss the individual effects of each proposal in following sections.

\subsection*{Baselines and GFlowNets can struggle with underfitting.}

Our benchmark results show that within a reasonable amount of training time, TB converges to policies that undersample the target mean and place too much weight on low rewards in two out of five environments.
This is particularly notable in SIX6 where over 50,000 training rounds TB sampled 800,000 data points (batch size of 16 per round), a relative ratio over $12 \times$ over the state space size $|\mathcal{X}| = 65,536$.
This convergence under the mean despite significantly oversampling the state space size shows the difficulty of finding a global minima over all states and trajectories in practice, and may signal challenges in GFlowNets training in larger state spaces.
Underfitting also varied by environment: all models struggled in PHO4 in particular.

\subsection*{TB and MaxEnt are often similar.}

MaxEnt has a unique global optima from $P_B$ distributing flow uniformly, while TB could in principle learn different flows with lower entropy $P_B$.
Prior work has not yielded much insight into the similarities and differences between MaxEnt and TB. 
In our experiments, we empirically observe that the training curves of MaxEnt and TB are similar, and generally have worse sample efficiency and convergence than our proposals.

\subsection*{Prioritized replay training improves sample efficiency.}
Prioritized replay training (PRT) improves on baselines.
It achieves the highest performance in PHO4 out of all models, and reduces the number of rounds to match the target mean in sEH by 3-fold, from 45840 to 19390.
In environments bag and SIX6, prioritized replay training does not improve the final converged GFlowNet (Table \ref{table:bigtable}), but noticeably improves the sample efficiency of improving mean error in learning curves (compare green vs. blue, Fig. \ref{fig:trainingcurves}).


\subsection*{Effect of relative edge flow parametrization.}

Relative edge flow parametrization (SSR) further improves training by a significant amount (compare red vs. green, Fig.  \ref{fig:trainingcurves}), and in particular noticeably shifts the learning curve in SIX6 by a factor up to 3$\times$.
In Bag and sEH, SSR improves the time to match the target mean by 2-fold.
These experiments show that SSR can meaningfully change the training behavior of GFlowNets, and improve them substantially on some tasks.
Further work may continue to investigate SSR and develop further improved parametrizations.

\subsection*{Effect of substructure guidance.}

Substructure guidance has the biggest effect in SIX6 and sEH, where it enables the only model able to match the target mean in SIX6, and improves sample efficiency in matching the target mean by $2\times$ in sEH (Table $\ref{table:bigtable}$).
Overall, these results confirm that different credit assignment preferences can significantly impact GFlowNet training, convergence, and sample efficiency.

The benefit of our proposed substructure guide can depend on how compositional $R(x)$ is.
Further, failures of the substructure guide to improve training may be due to overfitting.


\begin{figure}[htbp]
\vspace{-0.3in}
    \centering
    \begin{subfigure}{}
    \includegraphics[width=.8\columnwidth,trim={0cm 0cm 0cm 0cm},clip]{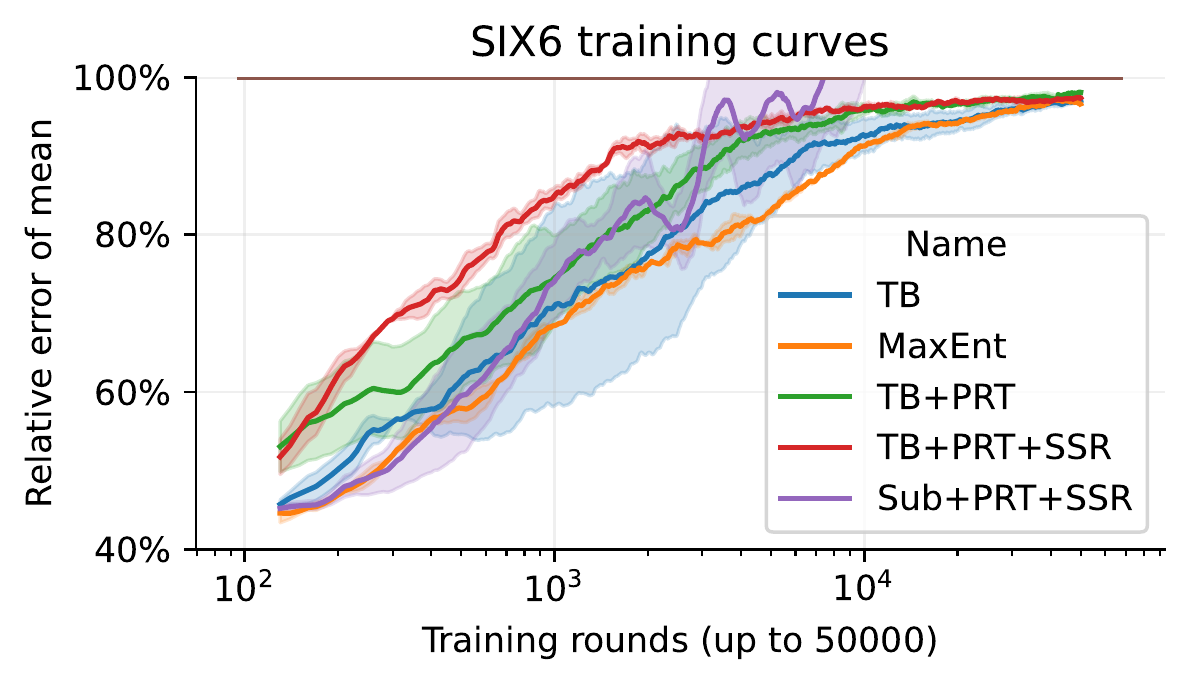}
    \end{subfigure}

    \vspace{-0.1in}
    \begin{subfigure}{}
    \includegraphics[width=.8\columnwidth,trim={0cm 0cm 0cm 0cm},clip]{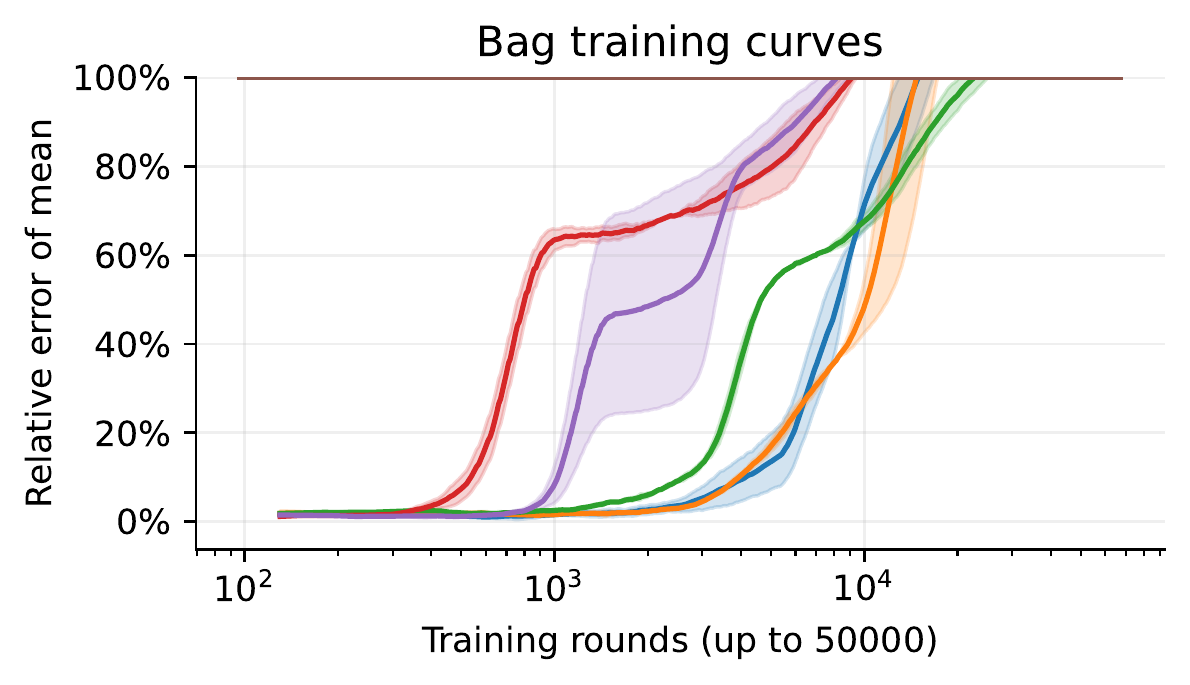}
    \end{subfigure}

    \vspace{-0.1in}
    \begin{subfigure}{}
    \includegraphics[width=.8\columnwidth,trim={0cm 0cm 0cm 0cm},clip]{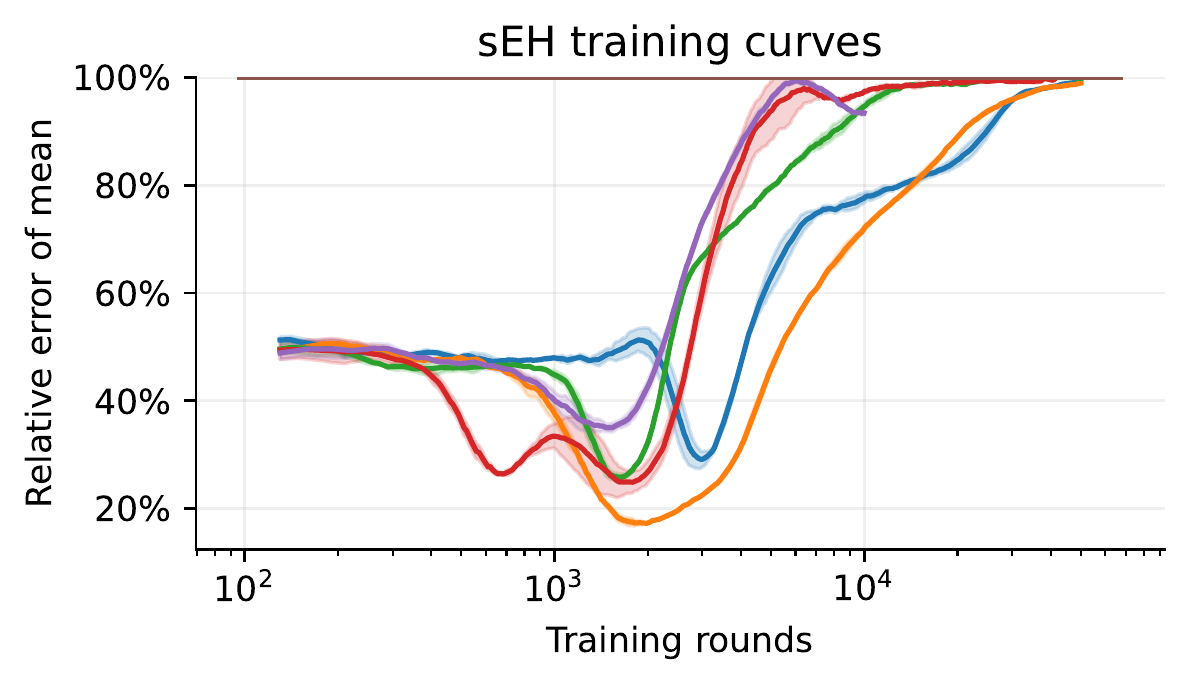}
    \end{subfigure}

    \vspace{-0.1in}
    \begin{subfigure}{}
    \includegraphics[width=.8\columnwidth,trim={0cm 0cm 0cm 0cm},clip]{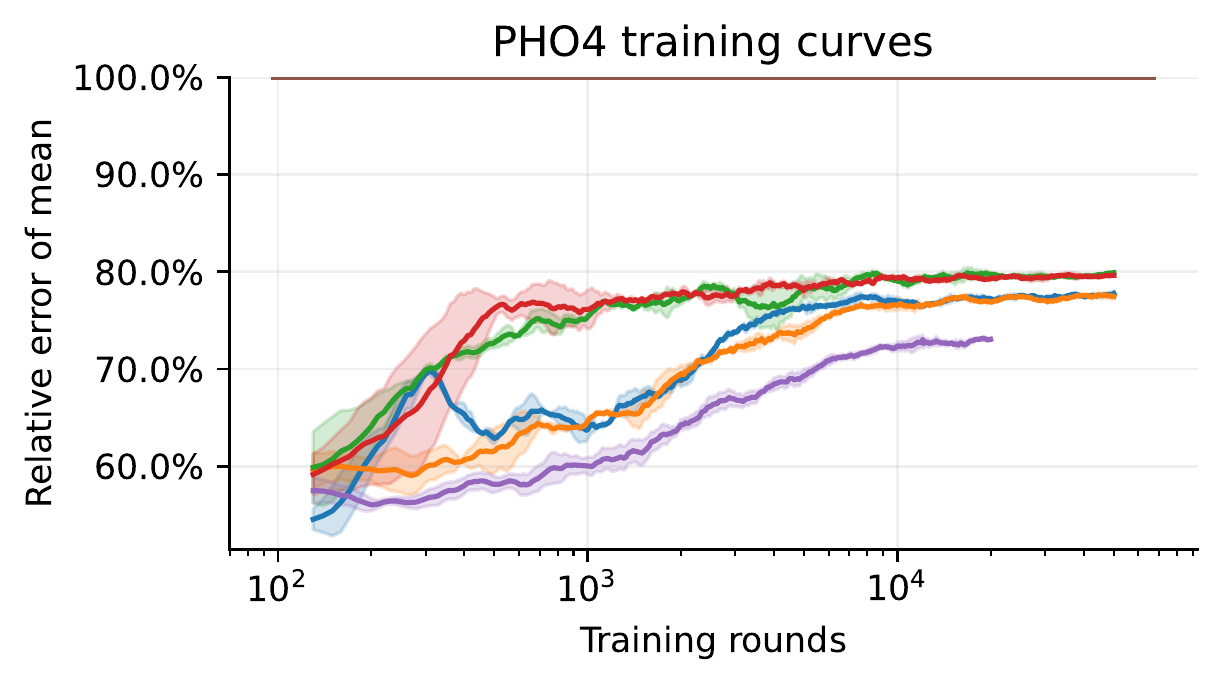}
    \end{subfigure}

    \vspace{-0.1in}
    \begin{subfigure}{}
    \includegraphics[width=.8\columnwidth,trim={0cm 0cm 0cm 0cm},clip]{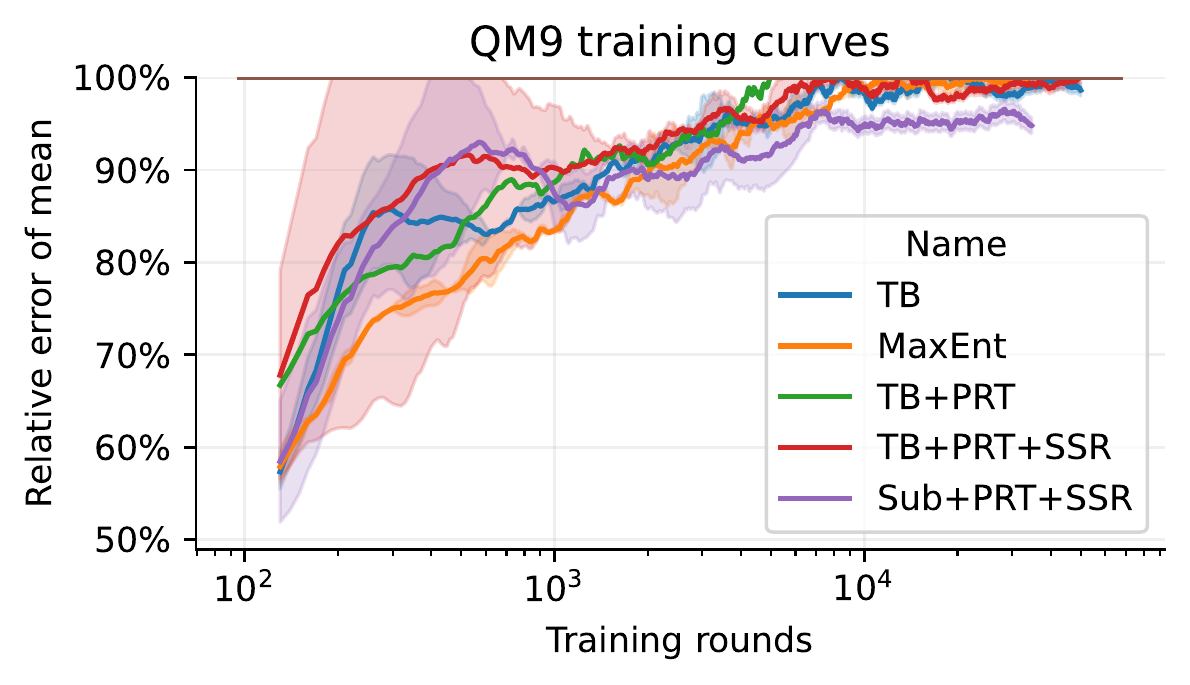}
    \end{subfigure}
    
    \vspace{-0.2in}
    \caption{Training curves. Generally, baselines (orange/blue) increase reward most slowly, and converge later or to lower values, than our proposals (green/red/purple). This effect is seen most clearly in Six6, Bag, and sEH, and less clearly in PHO4 and QM9.}
    \label{fig:trainingcurves}
\end{figure}

\subsection*{Autoregressive MDP.}
\label{section:autoregressive_analysis}

In general, users can have many choices in designing generative MDPs.
TFBind8 is a string-generation environment, for which autoregressive or append-only MDPs have commonly been used for GFlowNet training.
We trained a GFlowNet using an autoregressive MDP, and find final converged mean at 97.3\% after 50,000 training rounds with a TB baseline. 
Note that in an autoregressive MDP, standard TB is equivalent to MaxEnt and substructure guidance has no effect, as there is only one trajectory per $x$.
Adding PRT and PRT+SSR do not impact the final GFlowNet policy, achieving 96.9\% and 96.6\% respectively, but significantly improve sample efficiency, reaching their best policies by rounds 8000 and 3500 respectively, which is a 6-fold and 14-fold improvement (Fig. \ref{fig:autoregressive}). 

Can it be beneficial to use an MDP with more trajectories per $x$? For SIX6, we show the answer is that it can be. 
Using a prepend/append MDP alongside substructure guidance, PRT, and SSR is the only combination in our experiments that managed to match the target mean in SIX6 (Table \ref{table:bigtable}). 
This result shows that favoring autoregressive MDPs may not always be best, as their constraints on building order can limit which $x$ can be built from certain substructures.

\begin{figure}[htbp]
\vspace{-0.1in}
    \centering
    \begin{subfigure}{}
    \includegraphics[width=.8\columnwidth,trim={0cm 0cm 0cm 0cm},clip]{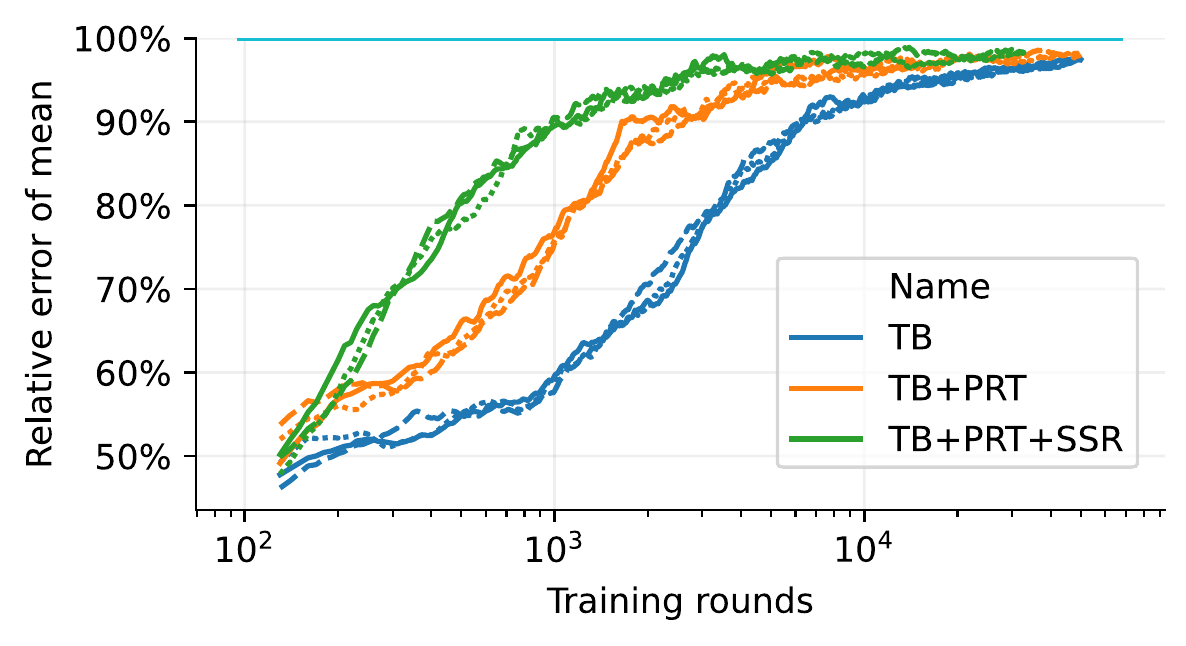}
    \end{subfigure}

    \vspace{-5mm}
    \caption{SIX6, autoregressive MDP. TB baseline (blue) is outperformed with PRT and SSR.}
    \label{fig:autoregressive}
\end{figure}

\vfill\null
\subsection*{Mode discovery and diversity.}

We confirm that improved sample efficiency and sampling higher reward $x$ with higher frequency also improves mode discovery, while retaining diversity.
In sEH, the TB baseline discovered 140 modes by active round 10,000, while Sub+PRT+SSR discovered 867 modes ($6\times$ more).
The Tanimoto diversity ($\uparrow$ is better) among the top 100 reward-ranked samples was 0.55, slightly better than 0.517 for the TB baseline.
This result is consistent with other experiments, where we found that training strategy had no significant impact on sample diversity.

\section{Discussion}

In this work, we have identified challenges with GFlowNet training to learn to match the target distribution in a reasonable number of training steps, and contributed a conceptual understanding of GFlowNet training in terms of generalization, flow distribution, and credit assignment.
We evaluated three proposals for improving GFlowNet training: prioritized replay training, relative edge flow parametrization, and substructure-guided trajectory balance.

We discussed challenges in GFlowNet evaluation, and studied GFlowNet learning behavior on benchmark tasks with biochemical data using the Anderson-Darling statistic between GFlowNet sampled rewards and the target reward distribution, and the difference in sampled mean to target mean. 
We learned that a key challenge in GFlowNet training is learning to not undersample the mean.
However, by evaluating with sampled rewards, we sacrificed exactness for efficiency.
While matching the target mean is better than undersampling it, the target mean can be matched while $p_\theta(x) \neq p^*(x)$ for all $x$.
In future work, more insights into GFlowNet learning behavior may be uncovered by more thoroughly comparing $p_\theta(x)$ to $p^*(x)$ on enumerable MDPs with real-world reward functions.

We have shown that altering flow distributions can significantly change generalization and training behavior, and developed a novel training objective that enables flexible control over credit assignment.
While we have defined one notion of optimality of flow distribution based on generalization and convergence, it remains an open question how best to learn favorable flow distributions.


\clearpage

\bibliographystyle{icml2023}
\bibliography{references}

\newpage
\appendix
\onecolumn

\section{Appendix: Notes}

\subsection{$\quad$ Oversampling low-reward $x$ at initialization}
\label{section:appendix_notes}
Using default PyTorch neural network initializations, we find that $P_F(s_{t+1} | s_t)$ outputting real numbers taken as logits has close to a uniform distribution.
We can understand this with a simple model: suppose $P_F$ is exactly a uniform distribution, all states have exactly the same number of actions $A$, and all trajectories have the same length $n$: then the probability of any trajectory is $1/A^n$.
It is difficult to precisely characterize the entropy of $p_\theta(x)$ at random neural net initialization because it depends on how many trajectories end in each $x$ in the MDP; denote this $N_x$. We can extend the above simple model to find that at initialization, the flow ending in $x$ is proportional to $N_x / A^n$.
Thus $p_\theta(x)$ has highest entropy and is the uniform distribution when $N_x$ is identical for all $x$. However, if $N_x$ varies substantially, then $p_\theta(x)$ will have lower entropy.

However, we note that the common practice of using high reward exponents (from 3 to 10) typically induces a very low entropy reward distribution, which can make it very unlikely in practice that at initialization, a GFlowNet's sampled mean is higher than the target mean.

\subsection{$\quad$ Substructure Guide.}
\label{section:appendix_subguide}

Our substructure guide uses a scoring function to favor children states $s$ that are a member of $x$ in $X$ with high reward.

In equation \ref{eq:substructure_score_maintext}, we presented a simplified scoring function for presentation purposes. The exact scoring function is:

\begin{align}
    \phi(s| x_{\texttt{target}} , X) &= 
        \begin{dcases}
            \E_{ \{x \in X \setminus \{ x_{\texttt{target}} \} : s \in x \} } R(x), & \text{if } s \in x_{\texttt{target}} \\
            0, & \text{otherwise},
        \end{dcases}
\end{align}

In words: the score $\phi$ scores a state $s$, given target $x_{\texttt{target}}$ and $X$, the set of all observed $x$ so far in training.
If the state $s$ is not in ($\in$) the target $x$, then the score is 0 (bottom line).
If the state $s$ is in the target (the state $s$ is a substructure of the target $x$), then the score is the average reward over the subset of $x$ in $X$, where for each $x$ in that subset, $s$ is a substructure of that $x$. This subset excludes the target: we only care about \textit{other} $x$ in $X$ that contains $s$.

When this scoring function is used to build a guide distribution over trajectories ending in $x_{\texttt{target}}$, it prefers trajectories along paths that include substructures/states $s$ that are substructures of high-reward $x$, other than $x_{\texttt{target}}$.

The main difference with the simplified equation \ref{eq:substructure_score_maintext} is that here, we exclude the target $x$ from the collection $X$. This focuses on substructures that are shared with other $x$. 

One corner case is when no child states are members of any $x$ in $X$ except for the target $x$: then the score for all children is zero. In this case, we set $p(s_{t+1} | s_t)$ to the uniform distribution over children $s$ that are members of the target $x$.
We refer readers to the code for a very precise description of the algorithm. 

\subsection*{Implementation details.}
We parallelize guide computation on CPUs, and observe no significant overhead on GPU model training, at the cost of using slightly out-of-date $X$.
We sample guide trajectories efficiently by progressively filtering $X$ as the sampled trajectory lengthens: note that if a state $s$ is not in some $x$, then any descendant of $s$ cannot be in $x$.

\subsection{$\quad$ Additional figures.}
\label{section:additional_figures}

\begin{figure}[htbp]
\vspace{-0.1in}
    \centering
    \begin{subfigure}{}
    \includegraphics[trim={0cm 0cm 0cm 0cm},clip]{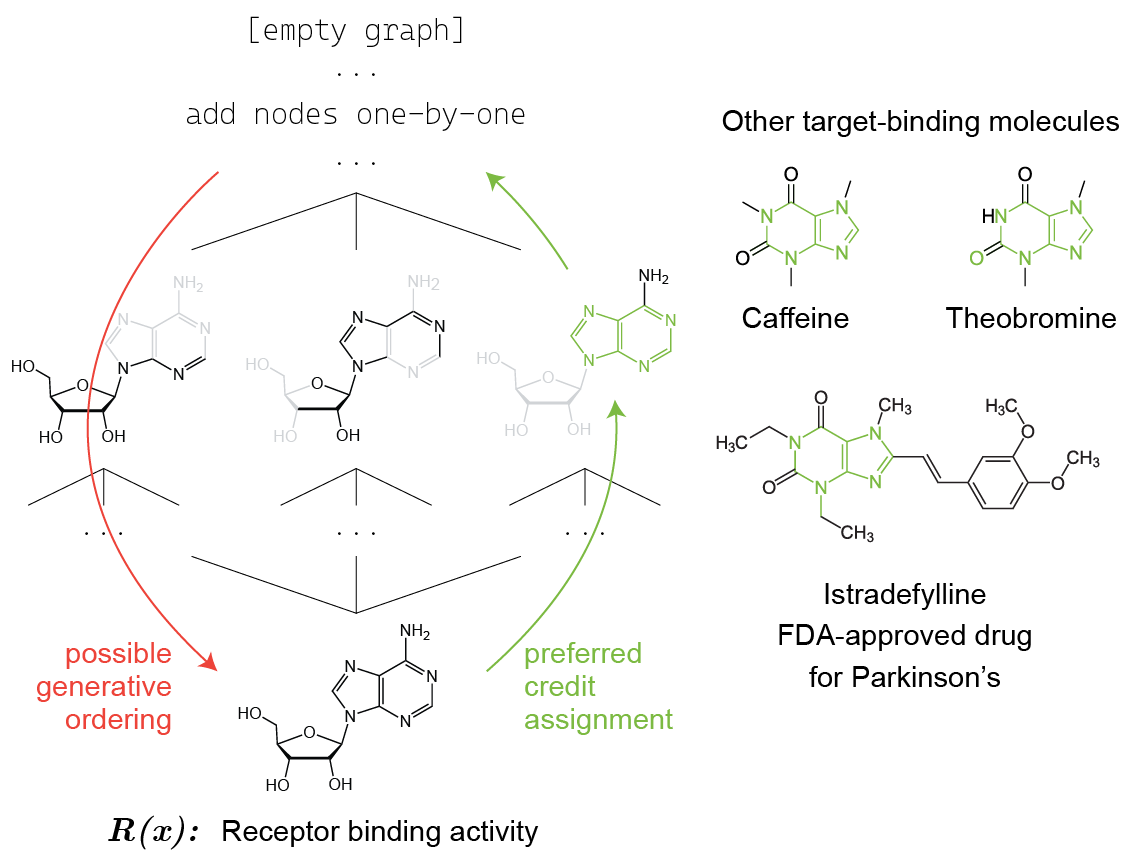}
    \end{subfigure}

    \caption{Intuition figure. (Left) Multiple trajectories lead to the same $x$. An example possible generative ordering (red) disagrees with the preferred credit assignment path (green), which contains a molecular substructure that causes receptor binding activity, a fact that could be learned by a substructure GFlowNet by looking at other molecules (right).}
    \label{fig:intuition_molecule}
\end{figure}

\newpage
\section{Appendix: Experimental Details}
\label{section:appendix_experiment}

Our code is available at \href{https://github.com/maxwshen/gflownet}{https://github.com/maxwshen/gflownet}.

\subsection*{Implementation details.}
We found it useful to clip gradient norms to a maximum of 10.0. We also clipped policy logit predictions to a minimum of -50.0 and a maximum of 50.0. We initialized log $Z_\theta$ to 5.0, which is less than the true $Z = \sum_{x} R(x)$ in every environment. 
In our experiments, every active training round we sampled a batch of 16 $x$ using the current training policy, and updated the model using the sample batch, then optionally performed one replay training update.
For monitoring, every 10 active rounds we sampled 128 $x$ from the GFlowNet policy (not the training policy).
To evaluate the model at a certain round while reducing sample noise, we aggregate over all true policy samples from the last 500 rounds, for an effective sample size of 6400.

There are several design choices for substructure gflownets. One can sample training trajectories from the training policy, or sample $x$ from the training policy and resample a training trajectory using the guide distribution.
We found that the former worked better in practice, though it can increase loss and gradient norm variance, which bounding gradient norm helped with. 
The mixing weight, $\alpha$, in equation \ref{eq:gtb-mix} is another choice. We found that $\alpha = 1$ was generally preferred, which eliminates the $P_B$ term and focuses solely on the guide likelihood term.

For prioritized replay training, we focus on the top 10\% ranked by reward and randomly sample among them to be 50\% of the batch, with the bottom 90\% ranked comprising the remainder of the batch.

\subsection*{Bag.} ($|\mathcal{X}| \approx 100$B)
A multiset of size 13, with 7 distinct elements. The base reward is 0.01. A bag has a ``substructure'' if it contains 7 or more repeats of any letter: then it has reward 10 with 75\% chance, and 30 otherwise. In this MDP, actions are adding an element to a bag.
As this is a constructed setting, we use a small neural net policy with two layers of 16 hidden units. We use an exploration epsilon of 0.10.

\subsection*{SIX6 (TFBind8).} ($|\mathcal{X}| = 65,536$)
A string of length 8 of nucleotides. The reward is wet-lab measured DNA binding activity to a human transcription factor, SIX6, from \citep{barrera, designbench}.
Following \citep{gfnbio}, we use a reward exponent of 3.
Though an autoregressive MDP is conventionally used for strings, in order to compare TB with substructure guidance which is only meaningful in generative MDPs with more than one trajectory per $x$, we use a sequence prepend/append MDP.
We use a neural net with two layers of 128 hidden units and an exploration epsilon of 0.01. 

\subsection*{PHO4.} ($|\mathcal{X}| = 1,048,576$)
10 nucleotide string, with wet-lab measured DNA binding activity to yeast transcription factor PHO4 \citep{barrera, designbench}.
We use a reward exponent of 3, and scale rewards to a maximum of 10.
We use a neural net with three layers of 512 hidden units and an exploration epsilon of 0.01. 

\subsection*{QM9.} ($|\mathcal{X}| = 58,765$)
A small molecule graph. 
Reward is from a proxy deep neural network predicting the HOMO-LUMO gap from density functional theory.
We use a reward exponent of 5, and scale rewards to a maximum of 100.
We build using 12 building blocks with 2 stems, and generate with 5 blocks per molecule. 
As all the blocks have two stems, we treat the MDP as a sequence prepend/append MDP. 
In general, the stem locations on the graph blocks are not symmetric, so reward is not invariant to string reversal.
We use a neural net with two layers of 1024 hidden units and an exploration epsilon of 0.10. 
We measure diversity using 1 - Tanimoto similarity.

\subsection*{sEH.} ($|\mathcal{X}| = 34,012,224$)
A small molecule graph.
Reward is from a proxy model trained to predict binding affinity to soluble epoxide hydrolase from AutoDock. 
Specifically, we trained a gradient boosted regressor on the graph neural network predictions from the proxy model provided by \citep{bengio2021flow} over the entire space of $\mathcal{X}$, with the goal of memorizing the model. Our proxy achieved a mean-squared error of 0.375, and pearson correlation of 0.905.
We use a reward exponent of 6, which is less than 10 used in prior work, and scale rewards to a maximum of 10.
We build using 18 blocks with 2 stems, and use 6 blocks per molecule.
As all the blocks have two stems, we treat the MDP as a sequence prepend/append MDP. 
In general, the stem locations on the graph blocks are not symmetric, so reward is not invariant to string reversal.
We use a neural net with two layers of 1024 hidden units and an exploration epsilon of 0.05. 
We define modes as the top 0.5\% of $\mathcal{X}$ as ranked by $R(x)$.
We measure diversity using 1 - Tanimoto similarity.

\subsection{Anderson-Darling statistic.}

In figure \ref{fig:andersondarling_correlation}, we depict the relationship between the Anderson-Darling statistic between GFlowNet samples and the target distribution, and the relative error between the GFlowNet sampled mean and the target mean.

\begin{figure}[h]
    \centering
    \includegraphics[width=0.5\textwidth]{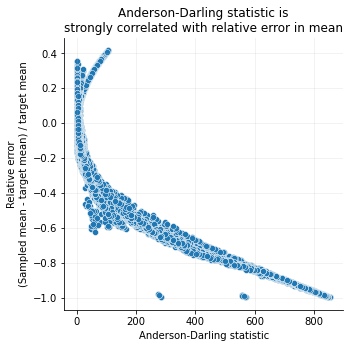}
    \label{fig:andersondarling_correlation}
\end{figure}

\newpage
\section{Appendix: Proofs}
\label{section:appendix_proofs}

In this section, we study the credit attribution behavior of various GFlowNet learning objectives in a simplified yet representative setting.

\subsubsection{Definitions}

\begin{definition}
        \label{def:prependappend}
    \textbf{(Sequence prepend/append MDP). }
    A sequence prepend/append MDP is an MDP where $s_0$ is the empty string, states correspond to strings, and actions are prepending or appending a symbol in some discrete alphabet $\mathcal{A}$ to a string.
    We consider this MDP to only generate strings $x$ of length $n$, i.e., the exit action only occurs at states corresponding to strings of length $n$, and the exit action is the only action available at those states.
\end{definition}

\begin{remark}
    While simple, this is representative of many MDPs that construct compositional objects by combining subparts, such as building graphs by connecting new nodes to existing nodes (often, $>2$ insertion points that increase as the graph enlarges).
\end{remark}

\medskip{}
\begin{definition}
    \label{def:tabulargfn}
    \textbf{(Tabular GFlowNet, with trajectory flow parameterization).}
    A tabular GFlowNet uses a table to store the flow network.
    We consider a trajectory flow parameterization, where the trajectory flow $F: \mathcal{T} \rightarrow \mathbb{R}_{\geq 0}$ is stored as a table.
    In contrast to the function approximation setting, updating any entry in the table does not change any other entry. 
    We assume all trajectory flows $F(\tau)$ are uniformly initialized to a small positive constant $\epsilon$.
    A trajectory flow $F(\tau)$ is updated with $y$ with 
    $F(\tau) \gets F(\tau) + \lambda (y - F(\tau))$ for some step size $0 < \lambda \leq 1$.
\end{definition}

\begin{remark}
    The tabular setting has historically been used to develop, motivate, and study properties of reinforcement learning algorithms \citep{suttonbarto}.
    Note that the trajectory flow parameterization is inefficient in practice, because computing $P_F$ on a single state can require summing over exponentially many trajectories.
    However, it is a useful parameterization for theoretical analysis for various reasons.
\end{remark}

\medskip{}

\medskip{}
\begin{definition}
    \label{def:setting}
    \textbf{(Setting A).}
    This setting comprises a sequence prepend/append MDP, and a fixed dataset $\{x, x'\}$, each of length $n$, with $R(x) = R(x')$.
    Let $s^*$ denote the longest string shared by $x, x'$, with length $k<n$: i.e., there is no string of length $k+1$ shared by both $x, x'$.
    Let $s_k(x)$ denote the set of $k$-length substrings of some $x$.
\end{definition}

\begin{remark}
    In this setting, every trajectory is a sequence of strings of length $(0, 1, 2, ..., n )$, thus every trajectory contains exactly one substring of length $k$ for every $ k \leq n$. 
    As a result, a trajectory ending in $x$ must pass through either $s^*$ (the shared substring) or some substring $s' \in s_k(x)$ (a non-shared substring).
    Our analysis will concern the credit assignment behavior of various learning objectives in how they assign flow to $F(s^*)$ compared to $F(s_k(x) \setminus s^*)$.
\end{remark}

\medskip{}

\subsubsection{Properties}

We now state various properties about setting \ref{def:setting}.

\begin{property}
    \label{property:exponential}
    \textbf{(Exponentially many trajectories for each $x$).}
    In a sequence prepend/append MDP, there are $2^{n-1}$ trajectories that end in a specific string $x$ of length $n$.
\end{property}

\begin{proof}
    To build a string $s$ of length $n$, we can choose an initial zero-based index $i$ for our first character, then we can take $i$ prepend actions and $n-i-1$ append steps in any order.
    There are 
    $\binom{n-1}{i}$
    $n-1$ choose $i$ 
    such sequences of prepend and append actions. Summing over choices of $i$, which can be viewed as summing over a row of Pascal's triangle, there are
    $\sum_{i=0}^{n-1} \binom{n-1}{i} = 2^{n-1}$
    trajectories.
\end{proof}

\begin{remark}
    This property shows how ``easy'' it is to design an MDP with exponentially many trajectories for each $x$.
    Note that an autoregressive, append-only MDP has exactly one trajectory for each $x$.
    The sequence prepend/append MDP has two ``insertion points'', points at which new content can be added, and has exponentially many trajectories for each $x$.
    In general, MDPs that construct objects by combining building blocks will likely have many insertion points.
    MDPs that do not have specific insertion positions, such as sets, also have at least exponentially many trajectories for each $x$.
\end{remark}

\begin{property}
    \label{property:numflowsthroughstate}
    \textbf{(Number of trajectories through a state $s$ to some $x$).}
    In a sequence prepend/append MDP, suppose $s$ of length $k$ is preceded by $a$ characters in $x$ of length $n$.
    There are $\binom{n-k}{a}2^{k-1}$ trajectories passing through $s$ that end at $x$.
\end{property}

\begin{proof}
    There are $2^{k-1}$ trajectories from the root to $s$ (property \ref{property:exponential}).
    There are $\binom{n-k}{a}$ action sequences corresponding to different orderings of prepending and appending, to start from $s$ and end at $x$.
    So in total there are $\binom{n-k}{a} 2^{k-1}$ trajectories passing through $s^*$ that end at $x$.
\end{proof}

\begin{property}
    \label{property:uniform}
    \textbf{Uniform trajectory flow distribution $\iff$ Uniform forward and backward policies.}
    In a sequence prepend/append MDP, the flow has a uniform trajectory flow distribution $F(\tau)$ if and only if $P_F$ and $P_B$ are uniform at every state.
\end{property}

\begin{proof}
    \textit{(Uniform trajectory flow distribution $\implies$ Uniform forward and backward policies).}

    Let the uniform $F(\tau) = \epsilon$.
    Recall that $F(s, s') = \sum_{\tau \in \mathcal{T}: (s, s') \in \tau} F(\tau)$.
    Suppose $s$ has length $k$ and therefore $s'$ has length $k+1$.
    Using property \ref{property:exponential}, there are $2^{k-1}$ trajectories from $s_0$ to $s$, and there are $2^{n-k-1}$ trajectories from $s'$ to any string of length $n$, for a total of $2^{n-2}$ trajectories that pass through the edge $s \rightarrow s'$.
    Using $F(\tau) = \epsilon$, we have for any $s$ with length $k$ and $s'$ with length $k+1$,
    $F(s, s') = 2^{n-2} \epsilon$.
    
    At any state corresponding to a string of length $k$, $P_F$ is only a distribution over strings of length $k+1$, and $P_B$ over $k-1$, because the MDP always adds a single character with every action.
    As $P_F$ and $P_B$ sample with probability proportional to the edge flow, and the edge flow is identical for every edge, $P_F$ and $P_B$ are uniform.
\end{proof}

\begin{proof}
    \textit{(Uniform trajectory flow distribution $\impliedby$ Uniform forward and backward policies).}

    Any trajectory $\tau = (s_0, s_1, ..., s_n)$ is a sequence of strings of length $(0, 1, 2, ..., n)$.
    We consider the forward policy first. Note that at $s_0$, there are $|\mathcal{A}|$ actions/children, corresponding to adding each letter in the alphabet $\mathcal{A}$.
    For any string of length 1 to $n-1$, there are $2|\mathcal{A}|$ actions/children.
    For any string of length $n$, there is 1 action (exiting).
    Using this, we have for any $\tau$, 
    \begin{align}
        p(\tau) &= P_F(s_1 | s_0) \prod_{t=1}^{n-1} P_F(s_{t+1}|s_{t})
        \\
        &= \frac{1}{|\mathcal{A}|} \prod_{t=1}^{n-1} \frac{1}{2|\mathcal{A}|}.
    \end{align}
    
    As this doesn't depend on the states in $\tau$, we have $p(\tau) = p(\tau')$ for any trajectories $\tau, \tau' \in \mathcal{T}$.
    Note that $F(\tau) = Z p(\tau)$.
    
    The backward policy case proceeds similarly, using the observation that each string of length 2 to $n$ has 2 parents.
\end{proof}


\subsubsection{Maximum entropy GFlowNets}

Recall that the maximum entropy learning objective learns $P_F^\theta$ using the trajectory balance constraint:

\begin{equation}
    Z_\theta \prod_{t=0}^{n-1} P_F^\theta(s_{t+1}|s_t) = R(x) \prod_{t=0}^{n-1} P_B(s_t | s_{t+1}) 
\end{equation}

for a trajectory $\tau$ ending in $x$. The maximum entropy objective fixes $P_B(s_{t-1}|s_t)$ as the uniform distribution. 
The trajectory balance constraint is an algebraic manipulation of the simpler detailed balancing constraint

\begin{equation}
    F(s) P_F(s_{t+1}|s_t) = F(s_{t+1}) P_B(s_t | s_{t+1})
\end{equation}

that ``chains'' the detailed balancing constraint over a complete trajectory. 
Other constraints can be derived by extending the chain over partial trajectories.
Note that the trajectory balance constraint with a uniform $P_B$ fully determines $P_F(s_{t+1}|s_t)$ when all $R(x)$ are known.

\begin{theorem}
    \label{thm:maxent}
    \textbf{(Maximum entropy GFlowNets attribute minimal credit to shared substructures).}
    In setting $\ref{def:setting}$, suppose the GFlowNet is trained with the maximum entropy learning objective. 
    Then, at the global minima, 

    \begin{equation}
        \E \Bigg[ \frac{F(s^*)}{F(s_k(x) \setminus s^* )} \Bigg] = \Theta \Bigg( \frac{1}{n-k} \Bigg)
    \end{equation}
    
    over uniformly random positions of $s^*$ in $x, x'$.
\end{theorem}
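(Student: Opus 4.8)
The plan is to first pin down the trajectory flow that MaxEnt actually learns, then read off the two state flows from the combinatorial counting already established in Property~\ref{property:numflowsthroughstate}, and finally average the resulting ratio over the random placement of $s^*$. For the learned flow: since MaxEnt fixes $P_B$ to be uniform, at the global minimum the trajectory balance constraint gives $F(\tau) = R(x)\prod_{t=0}^{n-1} P_B(s_t \mid s_{t+1})$ for every $\tau$ ending in $x$. In the prepend/append MDP every string of length $2,\dots,n$ has exactly two parents while the length-$1$ strings have a single parent, so the backward product equals $(1/2)^{n-1}$ regardless of $\tau$. Hence $F(\tau)=R(x)/2^{n-1}$ is \emph{uniform} across the $2^{n-1}$ trajectories ending in $x$ (and likewise for $x'$, using $R(x')=R(x)$); summing over all trajectories recovers $Z=R(x)+R(x')$ as a consistency check.

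Next I would compute the two state flows. Let $s^*$ sit at position $a$ in $x$ and position $a'$ in $x'$ (i.e.\ preceded by $a$, resp.\ $a'$, characters), and write $m=n-k$. By Property~\ref{property:numflowsthroughstate} there are $\binom{m}{a}2^{k-1}$ trajectories through $s^*$ ending in $x$ and $\binom{m}{a'}2^{k-1}$ ending in $x'$; multiplying by $F(\tau)=R(x)/2^{n-1}$ gives
\[
  F(s^*)=\frac{R(x)}{2^{m}}\Big(\binom{m}{a}+\binom{m}{a'}\Big).
\]
For the denominator, every trajectory into $x$ passes through exactly one length-$k$ substring (the remark following Definition~\ref{def:setting}), so the flow $x$ sends through all of $s_k(x)$ sums to $R(x)$; subtracting the share of $s^*$ leaves $F(s_k(x)\setminus s^*)=R(x)\big(1-\binom{m}{a}/2^{m}\big)$ in the generic case where no other length-$k$ substring of $x$ is shared with $x'$.

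Finally I would form the ratio and take the expectation over $a,a'$ drawn independently and uniformly from $\{0,\dots,m\}$. The key inputs are $\E[\binom{m}{a}]=2^{m}/(m+1)$ together with the central-binomial bound $\binom{m}{\lfloor m/2\rfloor}\le 2^{m-1}$, which sandwiches the denominator uniformly as $2^{m-1}\le 2^{m}-\binom{m}{a}\le 2^{m}$. Since the denominator of the ratio is therefore $\Theta(R(x))$ for every placement, I can pull $R(x)$ out and conclude $\E[F(s^*)/F(s_k(x)\setminus s^*)]=\Theta\big(2^{-m}\,\E[\binom{m}{a}+\binom{m}{a'}]\big)=\Theta\big(2/(m+1)\big)=\Theta(1/(n-k))$.

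The main obstacle is this last step: the same factor $\binom{m}{a}$ appears in both numerator and denominator, so the expectation does not factor and one cannot simply drop it. Using the $\Theta$ formulation (rather than the exact $2/(n-k)$ of the main-text proposition) is precisely what lets me resolve this cleanly, since the bound $\binom{m}{a}\le 2^{m-1}$ shows $2^m-\binom{m}{a}=\Theta(2^m)$ uniformly in $a$ and hence decouples numerator from denominator up to constants. The same bounds also make the argument robust to the non-generic cases — multiple occurrences of $s^*$, or additional shared length-$k$ substrings — which can only inflate the numerator by another $\Theta$-factor or push the denominator up to at most $2R(x)$, leaving the order $\Theta(1/(n-k))$ unchanged.
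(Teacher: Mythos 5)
Your proof is correct and follows essentially the same route as the paper's: uniform $P_B$ forces a uniform trajectory flow $R(x)/2^{n-1}$, Property~\ref{property:numflowsthroughstate} supplies the $\binom{n-k}{a}2^{k-1}$ count of trajectories through $s^*$, and averaging $\binom{n-k}{a}$ over the uniform position $a$ yields the $1/(n-k)$ scaling. Your last step is in fact more careful than the paper's, which silently replaces the expectation of the ratio by the ratio of expectations (reporting $2/(n-k)$); your uniform bound $2^{m}-\binom{m}{a}\ge 2^{m-1}$ showing the denominator is $\Theta(R(x))$ for every placement is precisely the missing justification for the $\Theta(1/(n-k))$ claim as stated.
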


\begin{remark}
    As all trajectories to $x$ must pass through either the shared substring $s^*$ (with length $k$) or a non-shared substring in $s_k(x) \setminus s^*$, credit assignment for $x, r(x)$ can be evaluated by how much flow, or reward, is assigned to $s^*$ versus $s_k(x) \setminus s^*$.
    This theorem states that a minority of credit is assigned to $s^*$, and it decreases linearly with  $n-k$.
\end{remark}

\begin{proof}
    As a uniform backward policy induces a uniform trajectory distribution over trajectories connecting $x$ to the root (property \ref{property:uniform}), and there are $2^{n-1}$ such trajectories (property \ref{property:exponential}), each trajectory ending in $x$ has trajectory flow $r(x)/2^{n-1}$.
    
    By property \ref{property:numflowsthroughstate}, there are $\binom{n-k}{a} 2^{k-1}$ trajectories passing through $s^*$ that end at $x$.
    Multiplying by the flow over each trajectory, the total flow passing through $s^*$ ending at just $x$ is

    \begin{equation}
        \frac{ \binom{n-k}{a} 2^{k-1} R(x) }{2^{n-1}}.
    \end{equation}

    The average number of trajectories over a uniform distribution on $a$, which can range from $0$ to $n-k$, is:
    
    \begin{equation}
        \frac{1}{n-k+1} \sum_{a=0}^{n-k} \binom{n-k}{a} = \frac{2^{n-k}}{n-k+1} = \E_a \Bigg[ \binom{n-k}{a} \Bigg]
    \end{equation}

    Thus, the expected flow passing through $s^*$ and ending at just $x$, over uniformly random positions of $s^*$ in $x$ is:
    
    \begin{align}
        \E_a \Bigg[ \frac{ \binom{n-k}{a} 2^{k-1} R(x) }{2^{n-1}} \Bigg] =
        \frac{R(x)}{n-k+1}
    \end{align}

    The total expected flow of $s^*$ to both $x, x'$ is $\frac{R(x)+R(x')}{n-k+1}$, while the total expected flow through $s_k(x) \setminus x^*$ is $\frac{(n-k)R(x)}{n-k+1}$.
    When $R(x) = R(x')$, the ratio is $2/(n-k)$, which scales as $\Theta(1/(n-k))$.

\end{proof}

\begin{remark}
    In a set MDP, a similar argument shows that the fraction of flow going through $s^*$ out of all flow going through all subsets of size $k$ scales as $\Theta(1/k!)$.
\end{remark}

\subsubsection{Trajectory balance GFlowNets}

The trajectory balance learning objective is more challenging to analyze because it has many global minima which are reached by a self-modifying learning procedure, where in each step the GFlowNet policy generates samples $x$ which are used to update the policy.



\begin{definition} 
    \textbf{ (Trajectory balance training procedure: Fixed dataset, tabular GFlowNet setting). }
    \label{def:tbtraining}
    
    This definition builds on setting \ref{def:setting}, and the definitions of a tabular GFlowNet (definition \ref{def:tabulargfn}). 
    We suppose that $\epsilon$, the initial value of all trajectory flows, is sufficiently small and $R(x)$ is sufficiently large such that, at initialization,

    \begin{equation}
        F_{\texttt{init}}(x) < R(x).
    \end{equation}
    
    We consider training for $m$ steps, with some learning rate $\lambda$. During each training step, we:
    
    \begin{enumerate}
        \item Sample a trajectory $\tau$ ending in $x$ or $x'$ according to the current GFlowNet policy.
            Under our tabular trajectory flow parameterization, we presume this is done by sampling a trajectory with probability proportional to its flow, among all trajectories ending in either $x, x'$ (see remark \ref{remark:nonmarkov}). 
        \item The GFlowNet is updated according to the trajectory balance learning objective. 
            We update $F(\tau) \gets F(\tau) + \Delta$,
            where we label $\Delta = \lambda (R(x) - F_{\texttt{init}}(\tau))$ as the positive, constant flow update. 
    \end{enumerate}
    
    If $\lambda = 2/m$, then when we have performed exactly $m/2$ training steps on $x$ and $x'$, we have $F(x) = R(x)$ and $F(x') = R(x')$.
\end{definition}


\begin{remark}
    This training procedure is closely related to standard GFlowNet active learning schemas \citep{bengio2021flow}.
    The condition that every trajectory ends in an $x$ in the fixed dataset is also used in the backward trajectory data augmentation training procedure used in \cite{discrete-zhang, gfnbio}.
    This setting is conceptually equivalent to standard active learning when $R(x)$ has the property that learning on any $x'', R(x'')$ for $x'' \notin X$ negligibly impacts the learned flow network. 
    Note that this training procedure has full support over states and trajectories in a tabular GFlowNet where all trajectory flows are initialized to a small positive constant.
\end{remark}

\begin{remark}
    (Non-Markovian trajectory sampling). 
    \label{remark:nonmarkov}
    Sampling a trajectory proportional to its flow when each trajectory has a separate tabular flow value induces a non-Markovian trajectory distribution, which acts as a relaxation of the standard GFlowNet procedure where trajectories are sampled according to the Markovian policy $P_F$ or $P_B$.
    For theoretical analysis, this simplifying assumption ensures that the principle of ``tabular independence'' is obeyed for trajectory sampling - changing the flow of some trajectory $\tau$ does not change the relative likelihood ratio of sampling any other two trajectories $\tau', \tau''$.
\end{remark}

\begin{center}
    \textcolor{gray}{ $\ast$~$\ast$~$\ast$ }
\end{center}

\begin{theorem}
    \label{thm:tb}
    \textbf{(Trajectory balance tends to reduce credit to shared substructures).}
    In setting $\ref{def:setting}$, suppose the GFlowNet is trained with trajectory balance according to the steps in $\ref{def:tbtraining}$. 
    Let $t$ index training steps. 
    At any training step $t$, 
    if

    \begin{equation}
        \label{eq:tbproofcondition}
        \frac{ F_{t-1}(s^*) }{ \E_{s \sim s_k(x) \setminus s^* } [ F_{t-1}(s) ] } < n-k
    \end{equation}
    
    holds (where the expectation is on a uniform distribution over the set), then
    the expected learning update over training trajectories has:
    
    \begin{equation}
        \E_{\tau} \Big[
            F_t(s^*)
        \Big]
        - F_{t-1}(s^*) 
        <
        \E_{\tau} \Big[
            F_t(s_k(x) \setminus s^* )
        \Big]
        - F_{t-1}(s_k(x) \setminus s^* )
        .
    \end{equation}
    
\end{theorem}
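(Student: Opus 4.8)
The plan is to exploit the tabular, trajectory-level parametrization to reduce the entire statement to one transparent fact: the expected one-step increment of any state flow equals the constant update $\Delta$ times the probability that the sampled trajectory threads that state. First I would record that, by Definition~\ref{def:tbtraining}, a single step picks one trajectory $\tau$ (ending at $x$ or $x'$) with probability proportional to its current flow and adds the \emph{constant} $\Delta = \lambda(R(x)-\epsilon) > 0$ to $F(\tau)$ alone. Because a state flow is the sum of trajectory flows through that state and, by tabular independence (Remark~\ref{remark:nonmarkov}, Definition~\ref{def:tabulargfn}), updating one table entry leaves the others fixed, the quantity $F(s)$ jumps by exactly $\Delta$ when the sampled $\tau$ passes through $s$ and by $0$ otherwise. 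Taking expectations over the random choice of $\tau$ then gives $\E_\tau[F_t(s)] - F_{t-1}(s) = \Delta\,\Pr[\tau \ni s]$ for every state $s$, and in particular for $s^*$ and for the aggregate $s_k(x)\setminus s^*$.

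Next I would evaluate the two pass-through probabilities under the proportional-to-flow sampling law. Writing $W_{t-1}$ for the total flow on trajectories ending at $x$ or $x'$, the probability of threading a state equals its flow divided by $W_{t-1}$, so that $\Pr[\tau \ni s^*] = F_{t-1}(s^*)/W_{t-1}$ and $\Pr[\tau \text{ threads } s_k(x)\setminus s^*] = F_{t-1}(s_k(x)\setminus s^*)/W_{t-1}$. The two probabilities share the same normalizer $W_{t-1}$, even though $s^*$ receives flow from trajectories ending at both endpoints while the non-shared $k$-substrings of $x$ (by maximality of $s^*$, none is a substring of $x'$) receive flow only from trajectories ending at $x$; this is the one bookkeeping point to state carefully. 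Consequently the target inequality on expected increments, after cancelling the common positive factor $\Delta/W_{t-1}$, is equivalent to $F_{t-1}(s^*) < F_{t-1}(s_k(x)\setminus s^*)$.

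Finally I would close the loop by rewriting the hypothesis~(\ref{eq:tbproofcondition}) into exactly this form. Since a length-$n$ string has $n-k+1$ substrings of length $k$, the set $s_k(x)\setminus s^*$ has $n-k$ elements, so the uniform average in the hypothesis is $\E_{s\sim s_k(x)\setminus s^*}[F_{t-1}(s)] = F_{t-1}(s_k(x)\setminus s^*)/(n-k)$. Substituting this, the assumed bound $F_{t-1}(s^*)/\E_{s}[F_{t-1}(s)] < n-k$ collapses precisely to $F_{t-1}(s^*) < F_{t-1}(s_k(x)\setminus s^*)$, the inequality established in the previous paragraph. I expect the only real friction to lie in the accounting of the second paragraph — verifying that the single tabular update yields an additive, $\Delta$-sized bump in each relevant state flow and that the two comparison flows are measured against a common normalizer — after which the argument is purely algebraic.
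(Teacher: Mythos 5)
Your proposal is correct and follows essentially the same route as the paper's proof: expected one-step increment equals $\Delta$ times the pass-through probability, both probabilities share a common normalizer so the comparison reduces to $F_{t-1}(s^*) < F_{t-1}(s_k(x)\setminus s^*)$, and the hypothesis collapses to exactly that via $|s_k(x)\setminus s^*| = n-k$. Your explicit bookkeeping of the normalizer as the total flow over trajectories ending at $x$ or $x'$ is in fact slightly more careful than the paper, which normalizes by $F(s_k(x))$; this makes no difference to the conclusion since the normalizer cancels.
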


\begin{remark}
    This proposition states that, unless $F(s^*)$ dominates $F(s)$ for any $s$ that is not a shared $k$-length substring, each trajectory balance training step tends to increase flow more for non-shared substrings than for $s^*$.
\end{remark}

\begin{proof}
    Recall the following definitions:
    $s^*$ is the longest substring shared between $x, x'$, and has length $k$.
    The set of $k$-length strings in $x$ is denoted by $s_k(x)$, and there are $n-k+1$ such strings.
    
    As any trajectory from $x$ to $s_0$ must pass through exactly one string in $s_k(x)$, and the state flow is defined as the total flow summed over all trajectories passing through that state, the probability of sampling a trajectory that passes through $s^*$ is
    
    \begin{equation}
        p(s^*)
        =
        \frac{F(s^*)}{F(s_k(x))} = 
        \frac{F(s^*)}{ \sum_{s \in s_k(x)} F(s)}
        .
    \end{equation}

    Suppose $F_{t-1}(s^*) < F_{t-1}(s_k(x) \setminus x^*)$. 
    Using the fact that there are $n-k$ items in $s_k(x) \setminus x^*$, this condition is equivalent to equation \ref{eq:tbproofcondition}.
    This means that $p_{t-1}(s^*) < p_{t-1}(s_k(x) \setminus s^*)$.
    
    Note that the expected increase in flow from one training step is:
    
    \begin{align}
        \E_{\tau} \Big[ F_t(s^*) \Big] - F_{t-1}(s^*) 
        &=
        \Delta p_{t-1}(s^*)
        \\
        \E_{\tau} \Big[ F_t(s_k(x) \setminus s^* ) \Big] - F_{t-1}(s_k(x) \setminus s^* )
        &=
        \Delta p_{t-1}(s_k(x) \setminus s^* )
    \end{align}
    
    Using $p_{t-1}(s^*) < p_{t-1}(s_k(x) \setminus s^*)$, we arrive at our proposition.
\end{proof}

\begin{center}
    \textcolor{gray}{ $\ast$~$\ast$~$\ast$ }
\end{center}

When trajectories are sampled according to their flow and not by a Markovian policy,
the ``rich-get-richer'' property can be understood through a Pólya urn model.

\begin{theorem}
    \label{thm:tbpolya}
    \textbf{(Trajectory balance training as a Pólya urn model).}
    In setting $\ref{def:setting}$, suppose the GFlowNet is trained with trajectory balance according to the steps in $\ref{def:tbtraining}$.
    After $m$ training steps, we have: 

    \begin{align}
        p \Bigg(
            \frac{
                F_{\texttt{final}}(s^*) - F_{\texttt{init}}(s^*)
            }{
                F_{\texttt{final}}(s_k(x, x')) - F_{\texttt{init}}(s_k(x, x'))
            }
            > \psi \Bigg)
        \leq
        \\
        1 - \texttt{BetaBinomialCDF} 
            \Bigg(
                \psi m ;
                n = m, 
                \frac{\alpha}{\alpha+\beta} = \frac{e}{\pi \sqrt{n-k}}, 
                \alpha+\beta = \frac{\epsilon 2^{n-1}}{ \Delta}
        \Bigg)
    \end{align}
    
    for $ \frac{e}{\pi \sqrt{n-k}} \leq \psi \leq 1$ and $n-k \geq 3$.
    
\end{theorem}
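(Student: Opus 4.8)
The plan is to recognize that, under the non-Markovian tabular sampling assumption of Definition~\ref{def:tbtraining} (Remark~\ref{remark:nonmarkov}), the training dynamics are exactly a two-color P\'olya urn. I would color each trajectory ending in $x$ or $x'$ by the unique $k$-length substring it passes through: ``color~1'' if that substring is $s^*$, and ``color~2'' otherwise. Sampling a trajectory proportionally to its flow and then adding the constant $\Delta$ to that trajectory's flow is precisely a draw-and-replace step, where a color-1 draw adds $\Delta$ to $F(s^*)$ and a color-2 draw adds $\Delta$ to $F(s_k(x,x')\setminus s^*)$, with draw probabilities proportional to the current color masses. Measuring flow in units of $\Delta$, the urn starts with $\alpha_0 = F_{\texttt{init}}(s^*)/\Delta$ color-1 balls and $\beta_0 = F_{\texttt{init}}(s_k(x,x')\setminus s^*)/\Delta$ color-2 balls, and each step adds one ball of the drawn color.

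This identification makes the target ratio trivial to express. Each of the $m$ steps adds exactly $\Delta$ to the total $k$-substring flow, so the denominator equals $\Delta m$, while the numerator equals $\Delta N_1$ where $N_1$ counts the color-1 draws; hence the ratio is $N_1/m$ and $p(\text{ratio} > \psi) = p(N_1 > \psi m)$. I would then invoke the classical fact that the number of color-1 draws in an $m$-step (generalized) P\'olya urn started at $(\alpha_0,\beta_0)$ follows a Beta-Binomial law with parameters $(m,\alpha_0,\beta_0)$, valid for real positive initial counts via the De Finetti/Beta mixing representation. This gives $p(N_1 > \psi m) = 1 - \texttt{BetaBinomialCDF}(\psi m; m, \alpha_0, \beta_0)$ exactly.

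The remaining work is to replace the exact urn parameters by the stated ones while preserving the $\le$ direction. For the mean parameter I would bound the initial color-1 fraction: by Property~\ref{property:numflowsthroughstate} the number of trajectories through $s^*$ when it sits at offset $a$ is $\binom{n-k}{a}2^{k-1}$, so the fraction is $\binom{n-k}{a}/2^{n-k}$, maximized at the central offset. A Stirling sandwich ($j!\le e\,j^{j+1/2}e^{-j}$ above and $j!\ge\sqrt{2\pi j}\,j^{j}e^{-j}$ below) yields $\binom{n-k}{\lfloor(n-k)/2\rfloor}\le \frac{e}{\pi\sqrt{n-k}}\,2^{n-k}$, hence $\alpha_0/(\alpha_0+\beta_0)\le \frac{e}{\pi\sqrt{n-k}}$ for any position; the total mass I would take as $\epsilon 2^{n-1}/\Delta$ from the per-string initialization ($2^{n-1}$ trajectories, each of flow $\epsilon$). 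To push these substitutions through I would use that the Beta-Binomial is stochastically increasing in its mean $\alpha/(\alpha+\beta)$ and, in its upper tail, stochastically heavier as the concentration $\alpha+\beta$ shrinks; both replacements then only increase $p(N_1>\psi m)$ precisely when $\psi \ge \frac{e}{\pi\sqrt{n-k}}$, i.e.\ when the threshold $\psi m$ lies at or above the urn mean $m\cdot\frac{e}{\pi\sqrt{n-k}}$, which is exactly the stated hypothesis, with $n-k\ge 3$ ensuring the bound is below $1$.

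The step I expect to be the main obstacle is this last substitution together with the bookkeeping when both $x$ and $x'$ are trained: the shared substring $s^*$ couples the color-1 mass across the two datasets, so I must either merge the two per-string urns into a single urn (legitimate, since drawing is proportional to flow over all trajectories ending in $x\cup x'$) or argue the resulting distribution is dominated in the upper tail by the single Beta-Binomial with the looser concentration $\epsilon 2^{n-1}/\Delta$. Getting the direction of the concentration monotonicity correct in the upper tail, and verifying it for the genuine Beta-Binomial rather than only its Gaussian or Binomial limits, is where I would be most careful; the Stirling estimate giving $e/(\pi\sqrt{n-k})$ is routine by comparison.
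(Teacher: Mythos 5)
Your proposal is correct and follows essentially the same route as the paper's proof: identify the tabular flow-proportional sampling with a Pólya urn, reduce to a two-color Beta-Binomial for $s^*$ versus $s_k(x,x')\setminus s^*$, upper-bound the initial $s^*$ fraction by the central binomial coefficient via Stirling to obtain $e/(\pi\sqrt{n-k})$, take $\alpha+\beta = \epsilon 2^{n-1}/\Delta$, and use monotonicity of the upper tail in the mean parameter for $\psi$ at or above the urn mean. The subtleties you flag (merging the $x$ and $x'$ contributions into one urn, and the direction of the tail comparison under the looser concentration) are exactly the points the paper also treats, if anything somewhat less explicitly than you do.
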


\begin{remark}
    A consequence of the Pólya urn model is that the fraction of total flow attributed during training that passes through $s^*$ follows a beta-binomial distribution.
    As each training step increases flow by a constant amount, this fraction is equal to the fraction of trajectories sampled through $s^*$ during the ``rich-get-richer'' training process. 
\end{remark}

\begin{proof}
    The Pólya urn model applies as follows: we have one color of ball in the urn for each $k$-length substring in either $x, x'$, where we recall that $s^*$ is the only $k$-length substring shared by both $x, x'$. There are therefore $2(n-k)+1$ colors.
    We denote the set of $k$-length substrings in either $x, x'$ as $s_k(x, x')$.
    At each training step, we sample a trajectory $\tau$ ending in either $x, x'$ which must pass through exactly one $s \in s_k(x, x')$ with probability proportional to $F(s)$; in the Pólya urn, this corresponds to sampling a ball from the urn with color $c$.
    We then increment the flow along $\tau$.
    Due to the tabular trajectory flow parameterization, and tabular trajectory sampling assumption in $\ref{def:tbtraining}$, this incrementing step does not change the flow to any other state in $s_k(x)$.
    In the Pólya urn, this corresponds to adding a ball with color $c$ to the urn. 
    
    The distribution of ball colors sampled from the Pólya urn, after $m$ steps, follows a Dirichlet-multinomial distribution.
    As we only care about $s^*$ compared to $s_k(x, x') \setminus s^*$, we can reduce to two colors, which follows a Beta-binomial distribution.
    We label 'white balls' for $s^*$ and 'black balls' for $s_k(x, x') \setminus s^*$.
    
    At initialization, the number of white balls for $s^*$ is determined by $F(s^*|x, x')$, which depends on its position in $x$ and $x'$ (property \ref{property:numflowsthroughstate}).
    We apply an upper bound on $F(s^*|x)$ to remove this dependence, which will let us treat $x, x'$ identically and simplify our exposition.
    
    Recall that $a$ denotes how many characters precede $s^*$ in $x$.
    We use the upper bound (see lemma \ref{lemma:upperboundpascalrow}):
    
    \begin{equation}
        \max_{0 \leq a \leq n-k} \binom{n-k}{a} \leq \frac{e2^{n-k}}{\pi \sqrt{n-k}}.
    \end{equation}
    
    For some $\hat{x}$ in $\{x, x'\}$, recall that there are $2^{n-1}$ trajectories ending in $\hat{x}$.
    We upper bound the number of trajectories passing through $s^*$ ending in $\hat{x}$ as:
    
    \begin{equation}
        \label{eq:upperboundtrajs}
        \binom{n-k}{a} 2^{k-1} \epsilon
        \leq 
        \frac{e2^{n-k}}{\pi \sqrt{n-k}} 2^{k-1}\epsilon 
    \end{equation}
    
    As both $x, x'$ are the same length, and $s^*$ is the only $k$-length substring in $x$ and $x'$, the upper bound on the proportion of trajectories with $s^*$ are equal for $x$ and $x'$.
    This corresponds to an ``upper-bounded'' Pólya urn where the number of white balls for trajectories with $s^*$ ending in $x$ or $x'$ is proportional to \ref{eq:upperboundtrajs}, and the number of black balls for any other trajectory ending in $x$ or $x'$ is proportional to $2^{n-1}$.
    We continue our analysis using this upper-bounded Pólya urn.
    
    We now solve for the parameters of the beta-binomial distribution that models the urn's draws.
    Let $\alpha$ denote the initial number of white balls for $s^*$. 
    As ``one ball'' is $\Delta$ flow, 
    
    \begin{align}
        \alpha &=
            \frac{e2^{n-k}}{\pi \sqrt{n-k}} \frac{2^{k-1} \epsilon }{ \Delta }
    \end{align}
    
    
    Similarly, let $\alpha + \beta$ denote the initial total number of balls.
    
    \begin{align}
        \alpha + \beta &= (2^{n-k})  \frac{ 2^{k-1} \epsilon }{ \Delta }
    \end{align}
    
    When $n-k = 3$, we have $\frac{\alpha}{ (\alpha + \beta) } = \frac{e}{\pi \sqrt{n-k}} = 0.499555773$, which is the mean of the normalized beta binomial confined between 0 and 1.
    As the beta binomial distribution's mean and variance are largest when $\alpha/(\alpha+\beta) = 0.5$, and decrease as $\alpha/(\alpha+\beta)$ decreases below 0.5, we have that for any threshold $\psi$ greater than or equal to the mean $\alpha / (\alpha + \beta)$, our upper bounded beta binomial has a larger cumulative probability density above $\psi$ than any beta binomial with smaller mean.
    Therefore, our upper-bounded beta binomial satisfies the bounds in the proposition.

\end{proof}

\begin{center}
    \textcolor{gray}{ $\ast$~$\ast$~$\ast$ }
\end{center}

\begin{lemma}
    \label{lemma:upperboundpascalrow}
    \textbf{(Upper bound on the largest element in a row of Pascal's triangle).}
    
    For any natural number $n$,

    \begin{equation}
        \max_{0 \leq a \leq n} \binom{n}{a} \leq \frac{e2^{n}}{\pi \sqrt{n}}.
    \end{equation}
    
\end{lemma}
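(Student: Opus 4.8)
The plan is to reduce the claim to a bound on the central binomial coefficient and then apply deliberately asymmetric Stirling estimates chosen so that the constant comes out as $e/\pi$.

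First I would record that the binomial coefficients $\binom{n}{a}$ are unimodal in $a$, so the maximum is attained at the center: $\max_{0 \le a \le n}\binom{n}{a} = \binom{n}{\lfloor n/2\rfloor}$. This is the standard ratio argument, since $\binom{n}{a+1}/\binom{n}{a} = (n-a)/(a+1) \ge 1$ exactly when $a \le (n-1)/2$. It therefore suffices to bound $\binom{2m}{m}$ when $n=2m$ is even and $\binom{2m+1}{m}$ when $n=2m+1$ is odd. The two ingredients I would use are the elementary Stirling bounds $n! \le e\sqrt{n}\,(n/e)^n$ (valid for $n\ge 1$) and $n! \ge \sqrt{2\pi n}\,(n/e)^n$; the loose upper constant $e$ rather than $\sqrt{2\pi}$ is precisely what manufactures the factor $e/\pi$ in the statement.

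For the even case $n=2m$, I would bound the numerator of $\binom{2m}{m} = (2m)!/(m!)^2$ by the upper Stirling bound and each factor in the denominator by the lower bound. The powers of $e$ cancel, $(2m)^{2m}/m^{2m} = 4^m$, and the remaining simplification gives $\binom{2m}{m} \le \frac{e\,4^m}{\pi\sqrt{2m}} = \frac{e\,2^n}{\pi\sqrt{n}}$, i.e. the claimed bound, and in fact with equality in this target form. For the odd case $n=2m+1$, I would route through the even case using Pascal's rule: since $\binom{2m+2}{m+1} = \binom{2m+1}{m}+\binom{2m+1}{m+1} = 2\binom{2m+1}{m}$, we have $\binom{2m+1}{m} = \tfrac12\binom{2m+2}{m+1}$, and applying the even bound to $\binom{2m+2}{m+1}$ yields $\binom{2m+1}{m} \le \frac{e\,2^{2m+1}}{\pi\sqrt{2m+2}} = \frac{e\,2^n}{\pi\sqrt{n+1}} \le \frac{e\,2^n}{\pi\sqrt{n}}$, so the odd case holds with room to spare.

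The main obstacle is not conceptual but bookkeeping: the bound is tight in the even case, so one must pair the crude upper Stirling estimate with the sharp lower one in exactly the right places for the constant to land on $e/\pi$ rather than something weaker. The one step needing its own short argument is the crude upper bound $n! \le e\sqrt{n}\,(n/e)^n$, which I would establish by showing the sequence $a_n = n!\,/\big(\sqrt{n}\,(n/e)^n\big)$ is nonincreasing with $a_1 = e$ (the ratio $a_{n+1}/a_n$ is below $1$ by a one-line estimate). Everything else is routine algebra, and the odd case requires no new estimate beyond the even one.
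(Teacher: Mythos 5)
Your proposal is correct and follows essentially the same route as the paper's proof: reduce to the central binomial coefficient, pair the crude upper Stirling bound $n! \le e\sqrt{n}\,(n/e)^n$ with the sharp lower bound $n! \ge \sqrt{2\pi n}\,(n/e)^n$ in the even case, and handle the odd case via Pascal's rule $\binom{n}{\frac{n+1}{2}} = \tfrac12\binom{n+1}{\frac{n+1}{2}}$. The only addition is your explicit justification of the upper Stirling bound via monotonicity of $n!/\bigl(\sqrt{n}\,(n/e)^n\bigr)$, which the paper takes for granted.
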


\begin{proof}
    Note that the series $ \binom{n}{0}, \binom{n}{1}, ..., \binom{n}{n}$ corresponds to a row of Pascal's triangle.

    First, consider when $n$ is even.
    Then $\binom{n}{a}$ is maximized when $a = n/2$.
    We therefore seek an upper bound to $\binom{n}{n/2}$.
    We use Stirling's approximation:
    
    \begin{align}
        \binom{n}{n/2} &\leq \frac{n!}{(\frac{n}{2}!)^2}
        \\
        &\leq \frac{
            e n^{n + \frac{1}{2}} e^{-n}
        }{
            (\sqrt{2\pi} (\frac{n}{2})^{\frac{n+1}{2}} e^{-n/2} )^2
        }
    \end{align}

    Following algebraic manipulations, we get
    
    \begin{align}
        \binom{n}{n/2} &\leq
        \frac{e 2^n}{\pi \sqrt{n}}
    \end{align}
    
    Now, consider when $n$ is odd.
    Then $\binom{n}{a}$ is maximized when $a$ is $n/2$ rounded up or down.
    This corresponds to $n \choose \frac{n+1}{2}$, which we note is equal to $n \choose \frac{n-1}{2}$. 
    By the recurrence relation of Pascal's triangle, we have
    
    \begin{align}
        {n \choose \frac{n+1}{2}} + {n \choose \frac{n-1}{2}}
        &=
        {n+1 \choose \frac{n+1}{2} }
        \\
        {n \choose \frac{n+1}{2}} &= \frac{1}{2} {n+1 \choose \frac{n+1}{2} }.
    \end{align}
    
    Applying the same upper bounding strategy for the even case on ${n+1 \choose \frac{n+1}{2}}$, we get
    
    \begin{align}
        {n \choose \frac{n+1}{2}} &\leq \frac{1}{2} \frac{e 2^{n+1}}{\pi \sqrt{n+1}}
        \\
        &\leq \frac{e 2^{n}}{\pi \sqrt{n+1}}
    \end{align}
    
    As this upper bound is less than the upper bound of $\frac{e 2^n}{\pi \sqrt{n}}$, we can apply that bound whether $n$ is odd or even.
\end{proof}

\subsubsection{Substructure GFlowNets}

\begin{theorem}
    \label{thm:substructure}
    \textbf{(Substructure GFlowNets attribute credit to shared substructures).}
    In setting $\ref{def:setting}$, suppose the GFlowNet is trained as a substructure GFlowNet. 
    Then, at the global minima over learned Markovian policies $P_B, P_F$, 

    \begin{align}
        F(s^*) &= R(x)+R(x'),\\
        F(s_k(x) \setminus s^*) &= 0.
    \end{align}
\end{theorem}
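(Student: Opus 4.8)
The plan is to characterize the substructure guide in Setting~\ref{def:setting}, show it concentrates all of its probability mass on trajectories through $s^*$, and then propagate this through the guided-TB global-minimum characterization to read off the two state flows.

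First I would analyze the guide $p(\tau_{\to x}\mid X)$ for the two targets. Here $X=\{x,x'\}$, so by the exact scoring function in \S\ref{section:appendix_subguide} the score for target $x$ is $\phi(s\mid x,X)=R(x')$ when $s$ is a common substring of $x$ and $x'$ and $0$ otherwise, with the uniform fallback over children of $x$ when all children score $0$. The key structural lemma I would prove is that the guide for target $x$ routes through $s^*$ with probability $1$, i.e.\ $\Pr_{p(\cdot\mid x)}(s^*\in\tau)=1$, while assigning probability $0$ to every $s\in s_k(x)\setminus s^*$. I would argue this by induction on trajectory length: while the current string $w$ is a proper substring of $s^*$, it admits at least one shared one-character extension that is again a substring of $s^*$ (extend $w$ inside $s^*$), so all positive guide weight stays on substrings of $s^*$; since the only length-$k$ substring of $s^*$ is $s^*$ itself, the guide necessarily sits at $s^*$ at length $k$. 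After $s^*$, every extension has length $>k$ and cannot be a common substring (else it would contradict $s^*$ being the longest shared string, Definition~\ref{def:setting}), so the uniform fallback drives the remainder of the trajectory up to $x$. By symmetry the same holds for target $x'$.

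Next I would invoke the guided-TB global-minimum characterization. I would first check that in Setting~\ref{def:setting} the guide is exactly representable by a single Markovian flow: on the shared states (substrings of $s^*$) the guides for $x$ and $x'$ prescribe identical uniform transitions over the same shared children, while beyond $s^*$ the length-$>k$ substrings of $x$ and of $x'$ are disjoint (a common one would again violate Definition~\ref{def:setting}), so the two targets occupy disjoint subtrees and there is no conflict. Consequently the back-GTB objective (\ref{eq:back-gtb}) attains loss $0$, giving $\prod_t P_B(s_{t-1}\mid s_t)=p(\tau_{\to x})$, and then by the uniqueness property (\cite{gfnfoundations}, Corollary~1) the trajectory-balance global minimum with this fixed $P_B$ satisfies the guided-TB constraint (\ref{eq:gtb_constraint}) exactly, so $F(\tau)=R(x)\,p(\tau_{\to x})$ for every $\tau$ ending in $x$, and analogously for $x'$. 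Finally I would compute the state flows using $F(s)=\sum_{\tau\ni s}F(\tau)$ and splitting over the two terminals: the concentration lemma gives $F(s^*)=R(x)\Pr_{p(\cdot\mid x)}(s^*\in\tau)+R(x')\Pr_{p(\cdot\mid x')}(s^*\in\tau)=R(x)+R(x')$, while for any $s\in s_k(x)\setminus s^*$ the target-$x$ guide never routes through $s$ and $s$ is not a substring of $x'$ (a length-$k$ common substring would equal $s^*$), so no trajectory to $x'$ passes through $s$ either; hence $F(s)=0$ and summing yields $F(s_k(x)\setminus s^*)=0$.

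The main obstacle is the concentration lemma of the first step: making the induction airtight requires the structural fact that every common substring of $x$ and $x'$ is contained in $s^*$ (so the guide cannot wander onto a shared but off-$s^*$ branch that later strands it, via the uniform fallback, on some $s\in s_k(x)\setminus s^*$), together with careful handling of the content-based (rather than position-based) nature of MDP states when arguing that a proper substring of $s^*$ always has a shared extension. The secondary subtlety is verifying in the second step that a single Markovian $P_B$ can match both target guides at once, which reduces to checking agreement on the shared states and disjointness of the post-$s^*$ subtrees.
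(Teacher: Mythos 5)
Your proposal follows the same route as the paper's own (three-sentence) proof: first establish that the substructure guide concentrates all of its mass on trajectories through $s^*$, then argue that the Markovian $P_B$ obtained from the back-GTB phase inherits this support property, and finally use the fixed-$P_B$ uniqueness of the TB global minimum to force all of $R(x)+R(x')$ through $s^*$. Your later steps are in fact more careful than the paper's: the check that a single Markovian $P_B$ can simultaneously match the guides for both targets (agreement on shared states, disjointness of the post-$s^*$ subtrees), the explicit appeal to the uniqueness corollary to get $F(\tau)=R(x)\,p(\tau_{\rightarrow x})$, and the final summation over trajectories are all points the paper disposes of by assertion.

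The gap is in the concentration lemma, exactly where you flag it, and it is not closable as written. Your induction shows that a path staying inside $s^*$ \emph{exists} with positive guide probability, but not that \emph{all} positive-probability paths do: the guide scores every common substring of $x$ and $x'$ equally (each gets $R(x')$), including common substrings that are not contained in $s^*$ and cannot be extended to a length-$k$ common substring. The structural fact you say the induction requires --- that every common substring of $x,x'$ is contained in $s^*$ --- is false for general instances of Setting \ref{def:setting}. Concretely, take $x=\texttt{ZABC}$ and $x'=\texttt{ABCZ}$, so $s^*=\texttt{ABC}$ is the unique longest common substring, $k=3$, $n=4$. The letter $\texttt{Z}$ is a common substring not contained in $s^*$, so the guide for target $x$ moves to state $\texttt{Z}$ with probability $1/4$; from $\texttt{Z}$ no extension inside $x$ is shared with $x'$, the uniform fallback of \S\ref{section:appendix_subguide} takes over, and the trajectory is forced through $\texttt{ZAB}\in s_k(x)\setminus s^*$. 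Hence the guide (and any Markovian $P_B$ matching it, and the resulting $P_F$ at the global minimum) places positive flow on $s_k(x)\setminus s^*$, so the theorem's conclusion fails on this instance. The paper's own proof asserts the concentration property in a single sentence, justified only by $s^*$ being the unique shared $k$-length substring, and therefore shares this gap; closing it requires an additional hypothesis (e.g.\ that every common substring of $x,x'$ is a substring of $s^*$) or a guide that handles dead-end shared branches differently. Relative to what the paper actually proves your write-up is at least as rigorous, but neither argument establishes the statement as given.
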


\begin{proof}
    As $s^*$ is the only $k$-length substring in both $x, x'$, all trajectories ending in $x, x'$ sampled by the substructure-aware guide distribution must include $s^*$. When $P_B$ reaches a global minima in matching the guide distribution, it will also have this property, because this property is Markov: for any $s_{k+1}$ of length $k+1$ connected to $s^*$, have $P_B(s^*|s_{k+1}) = 1$. When $P_F$ reaches a global minima, it therefore must assign all flow from $R(x), R(x')$ through $s^*$.
\end{proof}

\begin{remark}
    In practice, it may be preferable to not assign \textit{all} credit for $R(x), R(x')$ to a single substring $s^*$; in some situations, this may correspond to overfitting.
    Other credit assignments are easily achievable by other guide distribution designs, or mixing the substructure GFlowNet guide distribution with a uniform guide distribution.
\end{remark}

\end{document}